\newtheorem{lemma}{Lemma}
\newtheorem{theorem}{Theorem}
\DeclareMathOperator{\KL}{\text{KL}}
\newcommand{\DKLavg}{D_{\text{KL}}^{\text{avg}}}
\newcommand{\LCE}{\mathcal{L}_{\text{CE}}} 
\newcommand{\LPCE}{\mathcal{L}_{\text{PCE}}}  
\newcommand{\LDIRE}{\mathcal{L}_{\text{DIRE}}}
\title{Lightweight MSA Design Advances Protein
Folding From Evolutionary Embeddings}
\author{%
Hanqun Cao$^{1,5\dagger}$ \quad Xinyi Zhou$^{1,\dagger}$ \quad Zijun Gao$^{1,\dagger}$ \quad Chenyu Wang$^{2}$ \quad Xin Gao$^{3}$ \quad \\ 
  \textbf{Zhi Zhang}$^{4}$ \quad 
  \textbf{Cesar de la Fuente-Nunez}$^{5}$ \quad \textbf{Chunbin Gu}$^{1,*}$ \quad \textbf{Ge Liu}$^{5}$ \quad \textbf{Pheng-Ann Heng}$^{1}$ \\
    $^{1}$ CUHK \quad $^{2}$ MIT \quad $^{3}$ UCSD \quad $^{4}$ UvA \quad $^{5}$ UPenn \quad $^{6}$ UIUC \\
    $^{\dagger}$ Equal contribution.
    * Corresponding author.
}
\begin{document}

\maketitle

\begin{abstract}
Protein structure prediction often hinges on multiple sequence alignments (MSAs), which underperform on low-homology and orphan proteins. We introduce PLAME, a lightweight MSA design framework that leverages evolutionary embeddings from pretrained protein language models to generate MSAs that better support downstream folding. PLAME couples these embeddings with a conservation–diversity loss that balances agreement on conserved positions with coverage of plausible sequence variation. Beyond generation, we develop (i) an MSA selection strategy to filter high-quality candidates and (ii) a sequence-quality metric that is complementary to depth-based measures and predictive of folding gains.
On AlphaFold2 low-homology/orphan benchmarks, PLAME delivers state-of-the-art improvements in structure accuracy (e.g., lDDT/TM-score), with consistent gains when paired with AlphaFold3. Ablations isolate the benefits of the selection strategy, and case studies elucidate how MSA characteristics shape AlphaFold confidence and error modes. Finally, we show PLAME functions as a lightweight adapter, enabling ESMFold to approach AlphaFold2-level accuracy while retaining ESMFold-like inference speed. PLAME thus provides a practical path to high-quality folding for proteins lacking strong evolutionary neighbors.
\end{abstract}
\section{Introduction}
\begin{wrapfigure}{r}{0.5\textwidth}
    \centering
    \vspace{-36pt}
    \includegraphics[width=0.45\textwidth]{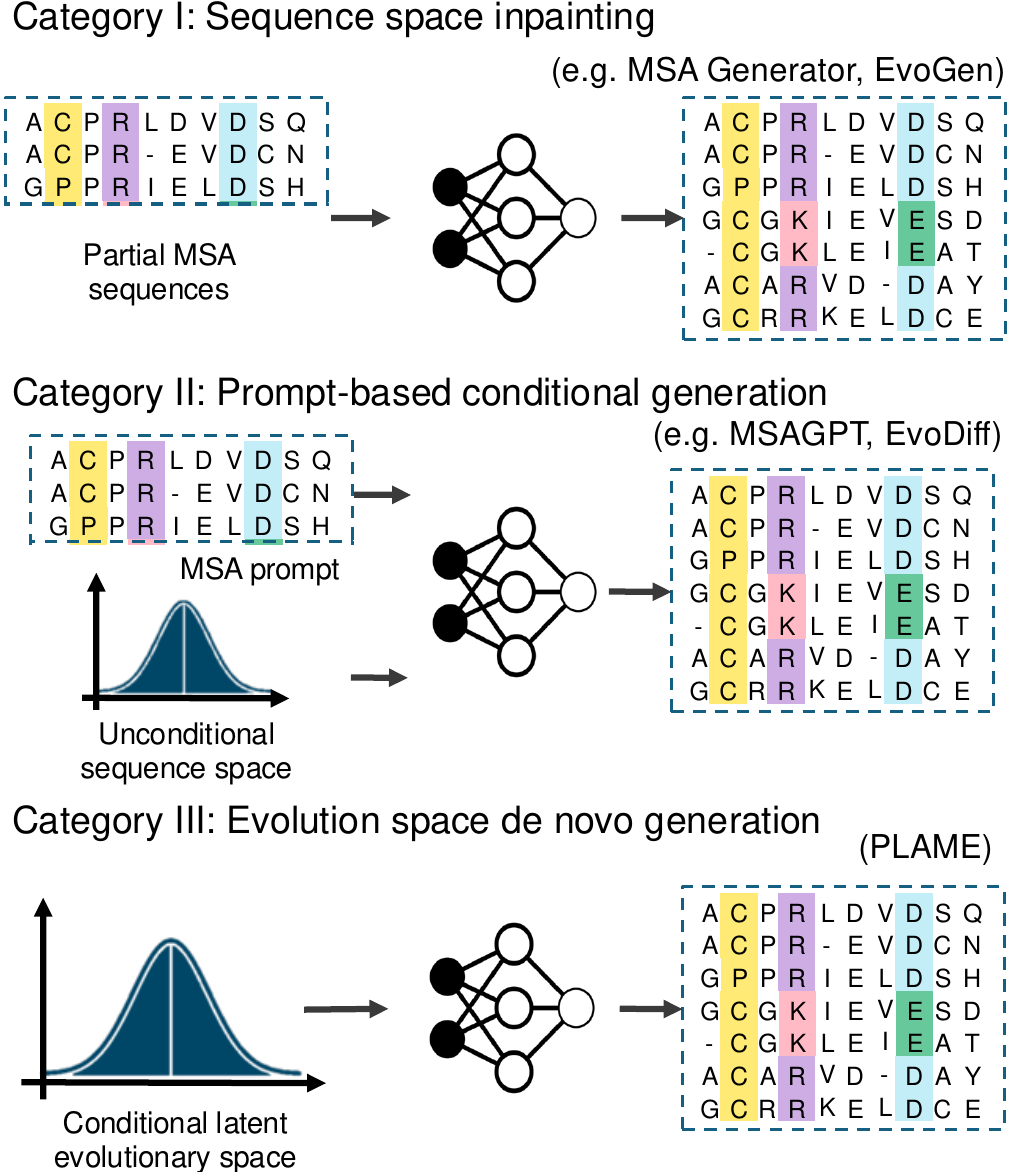}
    \vspace{-3mm}
    \caption{Taxonomy of MSA designers.}
    \vspace{-4mm}
    \label{fig:intro}
\end{wrapfigure}

Understanding complex and dynamic protein structures is fundamental to target identification, validation, and drug-target interaction studies in drug design \citep{baker2001protein,khoury2014protein}. Recent advances such as AlphaFold have revolutionized structural biology, achieving near-experimental accuracy across a broad spectrum of proteins and complexes \citep{jumper2021alphafold,ahdritz2024openfold,abramson2024accurate}. However, most state-of-the-art folding pipelines heavily rely on evolutionary information encoded within multiple sequence alignments (MSAs) \citep{lin2023evolutionary,abramson2024accurate}. Consequently, their accuracy is highly correlated with the quality and depth of available MSAs. This dependency creates failure modes in low-homology families and orphan proteins (those lacking or having few evolutionary neighbors) \citep{kwon2021assessment,webb2016comparative}, where even small amounts of noisy or misaligned sequences can dominate the signal.

Historically, two primary classes of techniques have been developed to address weak homology. Physics-based modeling searches for low-energy conformations in energy space through handcrafted or learned force fields, but is often computationally intensive and limited by approximations in the energy landscape \citep{rohl2004rosetta,cornell1995amber}. Template-based methods leverage homology detection and profile-profile alignment to transfer structural priors from known folds to novel sequences \citep{hildebrand2009fast,finn2011hmmer}, but suffer degraded performance in the absence of evolutionary signals, making them unsuitable for orphan proteins. These limitations have motivated a shift toward data-driven strategies that focus on \emph{improving the MSA itself} rather than solely the downstream folding networks.

Recent MSA design approaches can be broadly categorized into two paradigms (Figure~\ref{fig:intro}). \emph{Sequence-space inpainting} methods (e.g., MSA Generator, EvoGen) directly learn patterns in discrete sequence space to augment partial alignments, aiming to reconstruct evolutionary constraints from existing MSAs \citep{zhang2023enhancing,zhang2022few}. \emph{Prompt-based conditional generation} approaches (e.g., MSAGPT, EvoDiff) utilize pre-trained models to synthesize additional sequences under MSA-style prompts \citep{chen2024msagpt,alamdari2023protein}. These methods can deepen alignments and improve folding accuracy when homologous sequences exist. An orthogonal line of research bypasses explicit MSA construction by building \emph{implicit} evolutionary representations from single sequences through large protein language models (PLMs), as demonstrated by ESMFold \citep{lin2023evolutionary}. While MSA-free models avoid the homology bottleneck, they also forgo explicit template usage and enhanced homology signals, which may limit ultimate folding accuracy in challenging scenarios.

Despite existing progress, two critical gaps remain in structure prediction for low-homology proteins. \textbf{(i) Supervision bias:} Methods trained on existing MSA databases inherit biases toward well-studied families, limiting effectiveness for low-homology and orphan proteins. \textbf{(ii) Weak alignment-folding correlation:} Current approaches lack lightweight metrics linking MSA characteristics to folding outcomes. Sequence-based generative objectives may not align with factors that improve structural accuracy, while existing solutions like fine-tuning folding models \citep{chen2024msagpt} are computationally expensive and lack universal applicability.

In this study, we propose \textbf{PLAME}, motivated by the critical need to enhance structure prediction for low-homology proteins where traditional MSA-based approaches fail due to insufficient evolutionary signals. Our approach makes the following key contributions:

\begin{enumerate}
    \item \textbf{Embedding-space MSA generation with conservation-diversity optimization:} Inspired by PLMs' success in MSA-related tasks \citep{hong2024fast,wang2024maape,mcwhite2023leveraging}, we develop the first MSA designer that \emph{generates auto-regressively within the evolutionary embedding space of pre-trained PLMs} rather than discrete sequences (Fig\ref{fig:framework}). We further propose a novel conservation-diversity loss that captures conserved regions while extracting diverse variants from ESM embeddings with theoretical guarantee (Appendix\ref{app:proof}). The lightweight design enables PLAME to synthesize evolutionary neighborhoods even with scarce homologous sequences, achieving up to three orders of magnitude speedup while maintaining template compatibility (Table~\ref{tab:speed}).

    \item \textbf{HiFiAD: A principled MSA quality assessment framework:} To address the current weak alignment-folding correlation problem, we propose \textbf{Hi}gh-\textbf{Fi}delity \textbf{A}ppropriate \textbf{D}iversity (\textbf{HiFiAD}), a lightweight algorithm for MSA filtering that simultaneously considers site-wise conservation and inter-MSA diversity. This provides the first model-agnostic, computationally efficient criterion for selecting high-quality alignments that directly correlate with improved folding outcomes.
    
    \item \textbf{Comprehensive validation across challenging scenarios:} On challenging low-homology and orphan datasets, PLAME consistently improves folding accuracy in both AlphaFold2 and AlphaFold3, performing similarly to DHR \citep{hong2024fast}, AI-based MSA searching approach. In ablation studies, HiFiAD demonstrates performance gains across all baselines (Table\ref{tab:benchmark}). Moreover, case studies on general and \textit{de novo} proteins further demonstrate PLAME's generalizability while providing novel perspectives on structure enhancement from an MSA design standpoint (Table\ref{tab:denovo}). PLAME offers new insights and possibilities for folding enhancement through principled MSA optimization.
\end{enumerate}

\section{Related Works}
\paragraph{Protein Structure Prediction}
Protein structure prediction methods fall into three main categories: physics-based, homology-based, and deep learning approaches.  
Physics-based methods, such as AMBER and CHARMM, use molecular physics and energy optimization to simulate protein folding \citep{cornell1995amber,brooks2009charmm}. While offering detailed folding insights, they are computationally expensive and sensitive to initial conditions, often yielding suboptimal results \citep{karplus2002molecular,freddolino2010challenges,pande2010everything}.  
Homology modeling tools, like Rosetta and HHpred, use MSAs and evolutionary data to predict structures by refining templates from known experimental structures \citep{rohl2004rosetta,hildebrand2009fast}. These methods perform well with suitable templates but struggle with orphan proteins and low-homology families \citep{webb2016comparative,baker2001protein}. 
Deep learning-based methods, such as AlphaFold2 and OmegaFold, use advanced neural architectures and protein templates to achieve near-experimental accuracy with greater speed and scalability \citep{jumper2021alphafold,abramson2024accurate,wu2022high}. Despite their success, they still depend on high-quality MSAs and struggle with low-homology proteins.  

\paragraph{AlphaFold-based Enhancement}
Building on AlphaFold's success, researchers have developed methods to refine specific modules, aiming to improve accuracy or efficiency. These advancements can be grouped into three main categories.  
The first category focuses on homology expansion techniques, such as MMSeq2 and DeepMSA2, which expand the evolutionary search space to enhance prediction accuracy. However, these methods often slow down inference despite their modest performance gains \citep{johnson2010hidden,steinegger2017mmseqs2,zheng2024improving,lee2024petabase}.  
The second category targets search acceleration, with methods like ColabFold and ESMFold bypassing the MSA search process to enhance computational efficiency. However, this speedup often results in incomplete evolutionary data, potentially reducing prediction accuracy \citep{lin2023evolutionary,mirdita2022colabfold}.  
The third category leverages generative models to capture protein homology and augment input data, especially for orphan proteins and low-homology families. While promising in specific scenarios, these models struggle with extremely limited evolutionary signals, and their artificial sequences often deviate from traditional MSA distributions, limiting broader applicability \citep{alamdari2023protein,zhang2022few,zhang2023enhancing,chen2024msagpt}. 
\begin{figure}[t]
    \centering
    \includegraphics[width=\textwidth]{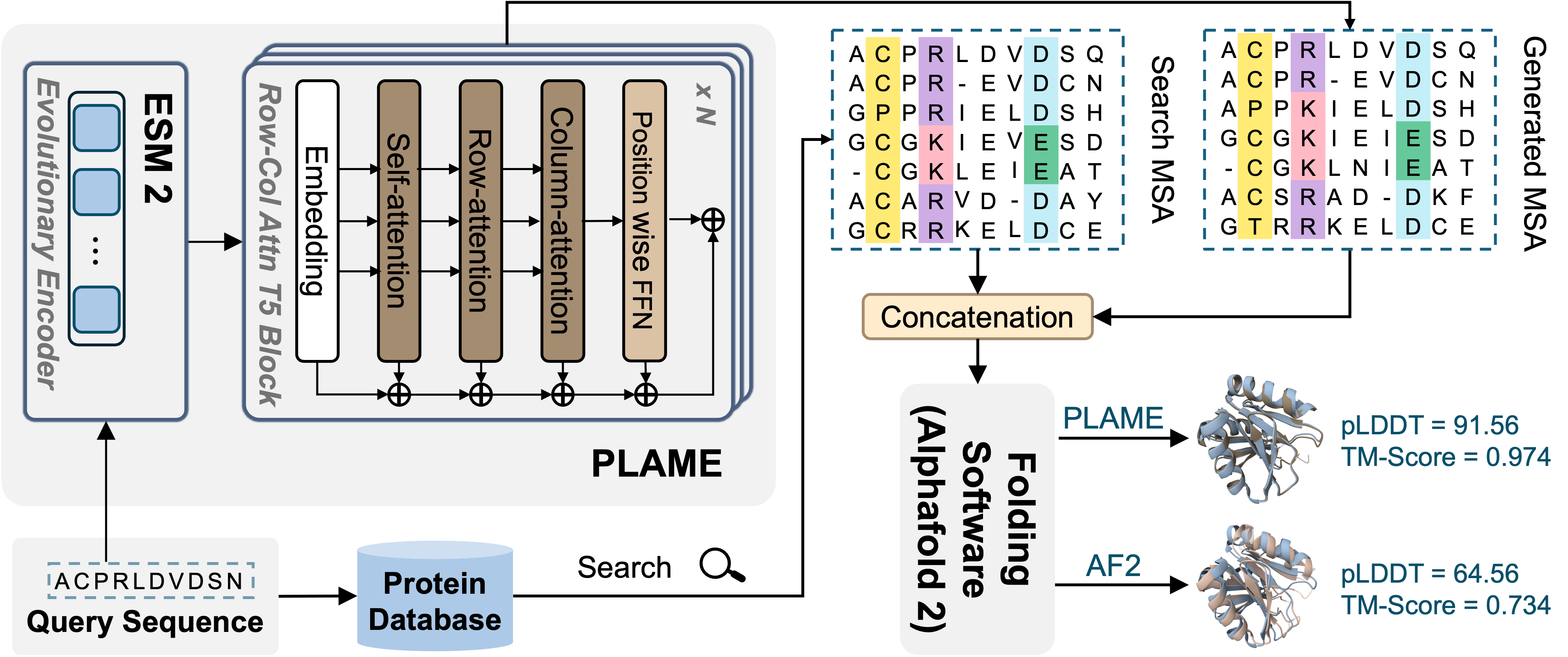}
    % \vspace{-6mm}
    \caption{Overview of PLAME framework. PLAME captures ESM-2 evolutionary representations, generating MSAs for augmenting the original MSAs. The augmented MSAs serve as the homology template for folding softwares for folding enhancement. In each block of the T5-architecture, additional row-attention and col-attention are applied to capture co-evolutionary information.}
    \label{fig:framework}
\end{figure}

\section{Method}
\subsection{Problem Formulation}
Protein structure prediction relies heavily on high-quality MSAs to provide evolutionary information, but the accuracy of folding software $\mathcal{F}_{\omega}$ significantly drops when MSAs are sparse or insufficient.
Given proteins $\mathbf{P} = \{\mathbf{s}, \mathbf{x}, \mathbf{M}\}$, where $\mathbf{s} \in \mathcal{S}$ are query sequences, $\mathbf{x} \in \mathcal{X}$ are 3D structures, and $\mathbf{M} = \{\mathbf{m}_1, \mathbf{m}_2, \dots, \mathbf{m}_N\} \in \mathcal{M}$ are MSAs with each $\mathbf{m}_i$ as an aligned homologous sequence. The goal of MSA design models $\mathbf{p}_{\theta}: \mathcal{M} \rightarrow \mathcal{M}$ is designing augmented MSAs $\mathbf{M}_{\text{aug}}$ that enhances evolutionary information to obtain more accurate structures $\mathbf{x}'$ using folding software $\mathcal{F}_{\omega}$.
\begin{equation}
\mathbf{M}' = \mathbf{p}_{\theta}(\mathbf{M}), \quad \mathbf{x}' = \mathcal{F}_{\omega}(\mathbf{s}, \mathbf{M}_{\text{aug}})
\end{equation}
where the augmented MSAs are composed of original MSAs $\mathbf{M}$ and generated MSAs $\mathbf{M}'$, denoted as $\mathbf{M}_{\text{aug}} = \{\mathbf{M}, \mathbf{M}'\}$.
The quality of the enhanced structures is evaluated using several metrics, including RMSD, TM-score, and pLDDT (See details in Section\ref{para:eval_metric}).

The key to high-fidelity MSA generation lies in constructing an informative evolutionary distribution $\mathbf{z}_{\text{evo}}$, which serves as the foundation for generating augmented MSAs $\mathbf{M}_{\text{aug}}$. Current methods utilize deep neural networks $\mathbf{f}_{\theta}$ to learn hidden evolutionary distributions directly from existing MSAs.
\begin{equation}
\mathbf{z}_{\text{evo}} = \mathbf{f}_{\theta}(\mathbf{M})   
\end{equation}
However, relying solely on sequence-level information from MSAs fails to capture the complete evolutionary landscape, particularly when MSA coverage is sparse or incomplete.
To overcome this limitation, we propose an evolutionary space based on evolutionary embeddings derived from pretrained protein language models (PLMs) $\mathbf{g}_{\phi}$.
\begin{equation}
\mathbf{z}_{\text{evo}} = \mathbf{f}_{\theta}(\mathbf{g}_{\phi}(\mathbf{s}))
\end{equation}

\subsection{Model Architecture}
\textbf{PLAME} employs an encoder-decoder transformer architecture similar to MSA Transformer \citep{rao2021msa}, with adjustments to the T5 block structure \citep{vaswani2017attention}. The encoder and decoder incorporate additional row-wise and column-wise attention mechanisms to better capture evolutionary patterns in MSA data (detailed in Fig \ref{fig:framework}), which is similarly applied in MSAGenerator \citep{zhang2023enhancing} and MSAGPT \citep{chen2024msagpt}. Additional mechanisms are introduced as follows.

\paragraph{Row Attention}
Row attention models inter-sequence dependencies by summarizing evolutionary relationships across MSA depth. Given input $\mathbf{H}_{\text{enc}} \in \mathbb{R}^{L \times N \times D \times h}$, we compute a global representation by depth-wise averaging:
\begin{equation}
\mathbf{H}_r = \frac{1}{D} \sum_{d=1}^D \mathbf{H}_{\text{enc}}^d \in \mathbb{R}^{L \times N \times h},
\end{equation}
where $\mathbf{H}_r$ encodes the evolutionary space for cross-attention during decoding:
\begin{equation}
\text{Row-Attn}(\mathbf{Q}_r, \mathbf{K}_r, \mathbf{V}_r) = \text{softmax}\left(\frac{\mathbf{Q}_r \mathbf{K}_r^\top}{\sqrt{h}}\right) \mathbf{V}_r.
\end{equation}

\paragraph{Column Attention}
Column attention captures positional conservation patterns across MSA columns. We transpose the decoder input $\mathbf{X}_{\text{dec}} \in \mathbb{R}^{D \times N \times L \times h}$ to $\mathbf{X}_{\text{dec}}^\top$ and compute cross-column attention with:
\begin{align}
\mathbf{Q}_{\text{c}} = \mathbf{X}_{\text{dec}}^\top \mathbf{W}_q, \quad \mathbf{K}_{\text{c}} &= \mathbf{H}_{\text{enc}}^\top \mathbf{W}_k, \quad \mathbf{V}_{\text{c}} = \mathbf{H}_{\text{enc}}^\top \mathbf{W}_v, \\
\text{Col-Att}(\mathbf{Q}_{\text{c}}, \mathbf{K}_{\text{c}}, \mathbf{V}_{\text{c}}) &= \left(\text{softmax}\left(\frac{\mathbf{Q}_{\text{c}} \mathbf{K}_{\text{c}}^\top}{\sqrt{h}}\right) \mathbf{V}_{\text{c}}\right)^\top.
\end{align}

\paragraph{Generation \& Inference}
ESM2 \citep{lin2023evolutionary} encodes the query sequence $\mathbf{s}$ into evolutionary embeddings $\mathbf{H}_{\text{input}}$. The encoder processes these through $N$ modified T5 layers:
\begin{equation}
\mathbf{H}_{\text{Enc}}^{(l)} = \mathbf{Enc}^{(l)}(\mathbf{H}^{(l-1)}), \quad l = 1, \dots, N, \quad \mathbf{H}^{(0)} = \mathbf{H}_{r}.
\end{equation}
The decoder autoregressively generates tokens conditioned on encoder output and previous tokens:
\begin{equation}
\mathbf{y}_t = \mathbf{Dec}(\mathbf{y}_{<t}, \mathbf{H}_{\text{Enc}}^{(N)}).
\end{equation}
Output embeddings are passed through softmax to produce token probabilities.
     
\subsection{Conservation-Diversity Training Loss}
We propose a position-aware causal inference approach for diverse MSA generation, integrating a \textbf{P}SSM-Weighted \textbf{C}ross-\textbf{E}ntropy (\textbf{PCE}) Loss and a \textbf{DI}versity \textbf{RE}gularization (\textbf{DIRE}) Loss to balance focus on conserved regions with sampling diversity.

\paragraph{PCE Loss}
The PCE Loss emphasizes accurate predictions in conserved regions of the MSA, which are critical for maintaining protein structure and function. For a single sequence, it is defined as:
\begin{equation}
\mathcal{L}_{\text{seq}} = - \sum_{l=1}^L w_l \cdot \log p(y_l \mid y_{<l}),
\end{equation}
where $L$ denotes sequence length, $y_l$ denotes the amino acid at site $l$, and $p(y_l \mid y_{<l})$ denotes the predicted discrete probability distribution of $y_l$.

The position-specific weights $w_l$ are derived from the Position-Specific Scoring Matrix (PSSM) \citep{henikoff1994position} and reflect the conservation level at each position. These weights are normalized to the range $[1 - \delta, 1 + \delta]$, where $\delta$ controls sensitivity to conservation. Specifically,
\begin{equation}
w_l = 1 + \delta \cdot \frac{\text{freq}_l - \text{min}(\text{freq})}{\text{max}(\text{freq}) - \text{min}(\text{freq})}.
\end{equation}
where $\text{freq}$ denotes the residue-frequency of 20 types of amino acids.
During model training, we apply $\delta = 0.5$, assigning higher weights to conserved positions and lower weights to less conserved ones.
For a batch of $N$ sequences, the PCE loss averages over all sequences and positions:
\begin{equation}
\mathcal{L}_{\text{PCE}} = - \frac{1}{N} \sum_{j=1}^N \sum_{l=1}^{L_j} w_l^{(j)} \cdot \log p(y_l^{(j)} \mid y_{<l}^{(j)}),
\end{equation}
where $L_j$ is the length of the $j$-th sequence, and $w_l^{(j)}$ is the weight for position $l$ in sequence $j$. This loss emphasizes conserved regions while allowing flexibility in less conserved areas.

\paragraph{DICE Loss}
The DIRE loss promotes sequence diversity by maximizing amino acid entropy:
\begin{equation}
\mathcal{L}_{\text{DIRE}} = - \frac{1}{N} \sum_{j=1}^N \frac{1}{L_j} \sum_{l=1}^{L_j} H_l^{(j)},
\end{equation}
where $H_l^{(j)} = - \sum_{a \in \mathcal{A}} p(a \mid y_{<l}) \log p(a \mid y_{<l})$ is the entropy at position $l$ in sequence $j$, and $\mathcal{A}$ is the set of all amino acids.

\paragraph{Combined Loss Function}  
The combined loss function balances conservation and diversity:  
\begin{equation}
\mathcal{L} = \alpha \cdot \mathcal{L}_{\text{PCE}} + (1 - \alpha) \cdot \mathcal{L}_{\text{DIRE}},
\end{equation}  
with $\alpha = 0.9$ prioritizing conservation while maintaining variability.
Our theoretical analysis in Appendix\ref{app:proof} demonstrates that PCE Loss enhances the model's understanding of MSA profile, while DIRE Loss functions as a regularizer to prevent neglect of variable regions.

\subsection{MSA Selection Method -- HiFiAD}
\label{subsec:selection}
Generated MSAs often contain noise that degrades folding performance. We propose HiFiAD to balance fidelity and diversity in MSA selection through principled quality metrics.

HiFiAD addresses two key challenges: (i) over-conserved sequences that distort evolutionary distributions when over-concatenated, and (ii) lack of systematic quality assessment for generated MSAs, by combining sequence similarity (fidelity) with diversity to maintain balanced evolutionary signals.

Given a query sequence $s$ and generated MSAs $M = \{m_1, m_2, \ldots, m_n\}$, we define:
\begin{align}
\label{eq:blosum}
S_{\text{BLOSUM}}(m_i, s) &= \sum_{j=1}^L B(s_j, m_{ij}), \quad \forall m_i \in M, \\
R(m_i, s) &= \frac{1}{L} \sum_{j=1}^L \mathbb{I}[s_j = m_{ij}], \quad \forall m_i \in M,
\end{align}
where $B$ is the BLOSUM62 matrix, $R(m_i, s)$ is the recovery rate, and $\mathbb{I}[\cdot]$ is the indicator function.

\textbf{Zero-shot selection} (Orphan proteins): Select top-$k$ sequences by $S_{\text{BLOSUM}}$ and sequences from top/bottom $k/2$ of recovery rate distribution, similar to the Static Diversity Strategy of MSAGPT.

\textbf{Few-shot selection} (Low homology proteins): Limit augmented MSAs to $N_{\text{max}} = \max(16, 2N_{\text{orig}})$ where $N_{\text{orig}}$ is the original MSA count. This design prevents evolutionary information distortion caused by excessive generated MSAs.
\section{Experiment}
\label{sec:exp}
\paragraph{Baselines} To evaluate PLAME's capability in generating high-fidelity and diverse MSAs, we compared it with several state-of-the-art AI-based MSA generation methods and AlphaFold2's MSA pipeline \citep{jumper2021alphafold}. 
The baselines include AF2 MSA \citep{johnson2010hidden}, and open-source methods including EvoDiff and MSAGPT \citep{chen2024msagpt,alamdari2023protein}. 
Additionally, we include an MSA-free method, ESMFold \citep{lin2023evolutionary}, to evaluate the complementary benefits of explicit MSA enhancement versus implicit evolutionary modeling.

\paragraph{Datasets}
For the training dataset, we used the PDB and UniClust30 subsets from the OpenProteinSet as our data source \citep{ahdritz2024openproteinset}. The pre-searched MSAs from OpenFold training were also included. We retained data with at least 64 MSA sequences. To avoid overlap with the test cases, we removed sequences with over 90\% similarity by MMSeqs based on UniClust30 clustering results \citep{mirdita2017uniclust,steinegger2017mmseqs2}. This process yielded an initial dataset of 293,979 samples, which were split into training and validation sets with a 90:10 ratio.
For the test dataset, we adopted the curated test cases from MSAGPT \citep{chen2024msagpt}, which consist of 200 protein samples from three benchmarks: CASP14\&15, CAMEO \citep{haas2018continuous}, and PDB \citep{berman2000protein}. Any $>90\%$ redundancy between the test cases and training dataset was eliminated.

\paragraph{Evaluation}
\label{para:eval_metric}
\textbf{Structural Assessment Metric}
We evaluate structure qualitywith local and global metrics. Local metrics include pLDDT (per-residue confidence) and LDDT (local distance difference test). Global metrics comprise GDT (global distance test), TM-Score (template modeling score) \citep{zhang2005tm}, pTM (predicted TM-score), and RMSD (root mean square deviation).

\textbf{AlphaFold2 Folding Modes}
To comprehensively assess MSA augmentation effectiveness, we evaluate three AF2 configurations with increasing computational complexity:
\begin{itemize}
    \item \textbf{Mode1}: pTM-3 model without templates (fast baseline) \citep{jumper2021alphafold}
    \item \textbf{Mode2}: Default 5 models without templates (standard setting) \citep{jumper2021alphafold}
    \item \textbf{Mode3}: Default 5 models with templates (full capability) \citep{jumper2021alphafold}
    \item \textbf{AF3}: Default 5 models with templates by AlphaFold3 \citep{abramson2024accurate}
\end{itemize}

\textbf{Sequence Assessment Metric}
We employ four sequence-based metrics to quantify alignment fidelity and diversity:

\textbf{1) Conservation Score} measures residue conservation at each position: $C_i = \text{Freq}_{\text{max}}(i)/N$, where $\text{Freq}_{\text{max}}(i)$ is the most frequent residue at position $i$ and $N$ is the sequence count. Higher scores indicate stronger evolutionary constraints.

\textbf{2) Gap Proportion} quantifies alignment completeness: $G_i = G(i)/N$, where $G(i)$ counts gaps at position $i$. Lower values indicate better alignment quality.

\textbf{3) Substitution Compatibility} evaluates evolutionary plausibility using BLOSUM62 scores $S_{\text{BLOSUM}}$ (Eq. \ref{eq:blosum}). Higher scores reflect greater biological relevance.

\textbf{4) Alignment Entropy} captures positional diversity via Shannon entropy:
\begin{equation}
    H_i = - \sum_{r \in \{R_i\}} p(r) \log_2 p(r)
\end{equation}
where $\{R_i\}$ represents unique residues at position $i$ and $p(r) = \text{count}(r)/N$. Higher entropy indicates greater diversity; lower entropy suggests functional conservation.

\begin{table*}[t]
\centering
\setlength{\tabcolsep}{2pt} % 调整列间距
\renewcommand{\arraystretch}{1.0} % 调整行间距
\caption{Performance metrics across different modes and models. The best results in each folding mode are highlighted in bold.
Zero and Few indicate zero-shot (proteins without MSAs) and few-shot cases (proteins with existing MSAs), respectively.}
\begin{tabular*}{\textwidth}{@{\extracolsep{\fill}}lcccccccccccc}
\toprule
\multirow{2}{*}{} & \multicolumn{2}{c}{\textbf{pLDDT}}($\uparrow$) & \multicolumn{2}{c}{\textbf{GDT}}($\uparrow$) & \multicolumn{2}{c}{\textbf{TMscore}}($\uparrow$) & \multicolumn{2}{c}{\textbf{RMSD}($\downarrow$)} & \multicolumn{2}{c}{\textbf{LDDT}}($\uparrow$) & \multicolumn{2}{c}{\textbf{pTM}}($\uparrow$) \\
\cmidrule(lr){2-3} \cmidrule(lr){4-5} \cmidrule(lr){6-7} \cmidrule(lr){8-9} \cmidrule(lr){10-11} \cmidrule(lr){12-13}
 & Zero & Few & Zero & Few & Zero & Few & Zero & Few & Zero & Few & Zero & Few \\
\midrule
ESMFold & 66.26 & 62.62 & 0.6 & 0.53 & 0.6 & 0.57 & 9.58 & 12.04 & 0.62 & 0.59 & / & / \\
\midrule
\rowcolor{gray!20} \multicolumn{13}{c}{\textbf{Mode1}} \\
\midrule
AF2 MSA & 60.07 & 62.14 & 0.50 & 0.52 & 0.50 & 0.57 & 12.34 & 12.16 & 0.54 & 0.58 & 0.44 & 0.49 \\
EvoDiff & 58.68 & 61.83 & 0.46 & 0.50 & 0.46 & 0.54 & 13.81 & 12.95 & 0.50 & 0.56 & 0.40 & 0.48 \\
MSAGPT & 59.81 & 61.18 & 0.48 & 0.51 & 0.48 & 0.56 & 12.62 & 12.35 & 0.53 & 0.57 & 0.43 & 0.48 \\
DHR & 63.64 & 62.60 & 0.51 & 0.52 & 0.52 & 0.57 & 12.04 & 11.92 & 0.55 & 0.59 & / & / \\
\rowcolor{blue!10}
\textbf{PLAME} & \textbf{66.54} & \textbf{66.08} & \textbf{0.53} & \textbf{0.54} & \textbf{0.53} & \textbf{0.58} & \textbf{11.48} & \textbf{12.14} & \textbf{0.57} & \textbf{0.60} & \textbf{0.49} & \textbf{0.52} \\
\midrule
\rowcolor{gray!20} \multicolumn{13}{c}{\textbf{Mode2}} \\
\midrule
AF2 MSA & 66.56 & 66.32 & 0.51 & 0.55 & 0.52 & 0.60 & \textbf{12.06} & 11.84 & 0.55 & 0.61 & / & / \\
EvoDiff & 61.98 & 65.83 & 0.48 & 0.53 & 0.48 & 0.58 & 14.23 & \textbf{11.82} & 0.52 & 0.59 & / & / \\
MSAGPT & 64.88 & 65.96 & 0.51 & \textbf{0.56} & 0.51 & 0.60 & 12.60 & 11.90 & 0.55 & 0.61 & / & / \\
\rowcolor{blue!10}
\textbf{PLAME} & \textbf{67.77} & \textbf{67.48} & \textbf{0.53} & 0.55 & \textbf{0.54} & \textbf{0.60} & 12.62 & 11.90 & \textbf{0.57} & \textbf{0.61} & / & / \\
\midrule
\rowcolor{gray!20} \multicolumn{13}{c}{\textbf{Mode3}} \\
\midrule
AF2 MSA & 70.31 & 69.61 & 0.57 & 0.60 & 0.57 & 0.64 & \textbf{10.53} & \textbf{10.24} & 0.60 & \textbf{0.65} & / & / \\
EvoDiff & 64.39 & 68.54 & 0.51 & 0.57 & 0.51 & 0.61 & 13.20 & 10.81 & 0.54 & 0.62 & / & / \\
MSAGPT & 68.39 & 69.30 & 0.57 & \textbf{0.60} & 0.56 & 0.64 & 11.05 & 10.40 & 0.59 & 0.64 & / & / \\
\rowcolor{blue!10}
\textbf{PLAME} & \textbf{71.50} & \textbf{70.48} & \textbf{0.58} & 0.59 & \textbf{0.58} & \textbf{0.64} & 11.41 & 10.62 & \textbf{0.60} & 0.64 & / & / \\
\midrule
\rowcolor{gray!20} \multicolumn{13}{c}{\textbf{AF3}} \\
\midrule
AF2 MSA & 66.34 & \textbf{72.54} & 0.55 & 0.61 & \textbf{0.56} & 0.65 & 11.29 & 10.29 & 0.58 & \textbf{0.66} & / & / \\
\rowcolor{blue!10}
\textbf{PLAME} & \textbf{70.23} & 72.00 & \textbf{0.55} & \textbf{0.62} & 0.55 & \textbf{0.65} & \textbf{11.23} & \textbf{10.26} & \textbf{0.59} & 0.65 & / & / \\
\bottomrule
\end{tabular*}
\label{tab:benchmark}
\end{table*}

\subsection{Structure Benchmark Assessment}
We evaluated PLAME across three AF2 folding modes and AF3, using six structural metrics to assess MSA generation quality (See details in Table\ref{tab:benchmark}).

\subsubsection{General Performance Comparison}
PLAME demonstrates consistent superiority across both zero-shot and few-shot scenarios against traditional MSA searching, AI-based searching, and AI-based generative methods, establishing a new paradigm for MSA generation without traditional homology search. In zero-shot settings, where proteins lack existing MSAs, PLAME achieves remarkable improvements with pLDDT scores reaching 71.50 in Mode3, significantly outperforming competing methods like EvoDiff (64.39) and MSAGPT (68.39). Moreover, the performance gap becomes even more pronounced in challenging scenarios: while EvoDiff and MSAGPT often introduce detrimental noise when their generated sequences are concatenated with original AF2 MSAs, PLAME consistently enhances folding quality. Interestingly, few-shot scenarios reveal that existing methods can partially recover performance when guided by initial homologous sequences, yet PLAME maintains its edge by generating more coherent evolutionary profiles that complement rather than interfere with existing MSAs.

\subsubsection{Mode-Dependent Performance Patterns
}
The progression from Mode1 through AF3 reveals intriguing insights about the relationship between model sophistication and MSA augmentation benefits. Mode1 and Mode2 demonstrate the strongest relative improvements from PLAME-generated MSAs, with pLDDT gains of up to 5 points across different baseline methods. As configurations advance to Mode3 with structural templates, the enhancement effects become more nuanced—while absolute performance continues to improve, the marginal gains from MSA augmentation diminish because template information already captures substantial evolutionary constraints. This phenomenon reflects a fundamental trade-off in modern protein folding: as models become more powerful and incorporate diverse information sources, the additional value of synthetic MSAs decreases, though PLAME's high-quality generations continue to provide meaningful contributions. The AF3 results further validate this trend, showing that PLAME maintains its effectiveness even with more advanced folding architectures, suggesting that high-quality virtual MSAs remain valuable complements to cutting-edge structural prediction methods.

\subsubsection{PLAME vs ESMFold: Bridging Efficiency and Accuracy}
The comparison with ESMFold reveals PLAME's unique position in the protein folding landscape, offering a compelling alternative that combines computational efficiency with enhanced accuracy. While ESMFold achieves reasonable baseline performance (pLDDT of 66.26), PLAME progressively widens this gap as more sophisticated folding configurations are employed. In basic Mode1, PLAME shows modest improvements, but the advantage becomes substantial in Mode3 where PLAME reaches 71.50 pLDDT compared to ESMFold's unchanged 66.26. This trend suggests that PLAME-generated MSAs provide increasingly valuable evolutionary context that more advanced folding models can effectively exploit. The consistent RMSD improvements across all modes further validate that PLAME's virtual MSAs contribute meaningful structural constraints, enabling users to achieve AF2-level accuracy while maintaining the computational advantages of MSA-free approaches.

\subsection{Sequence quality assessment}
To evaluate generated MSA quality beyond structural perspectives, we conducted sequence-level analysis by fidelity and diversity metrics. This provides an additional critical gap—establishing criteria for understanding generated MSA quality.  Figure~\ref{fig:seq_metric} presents our comparative analysis.

\textbf{PLAME achieves superior evolutionary fidelity by closely mimicking the distributional characteristics of natural MSAs across all key metrics.} The results reveal PLAME's distributions align most closely with AF2 MSAs in Conservation Score, Gap Proportion, and Substitution Compatibility, demonstrating its ability to capture authentic evolutionary constraints. 
\begin{wrapfigure}{r}{0.5\textwidth}
    \centering
\includegraphics[width=\linewidth]{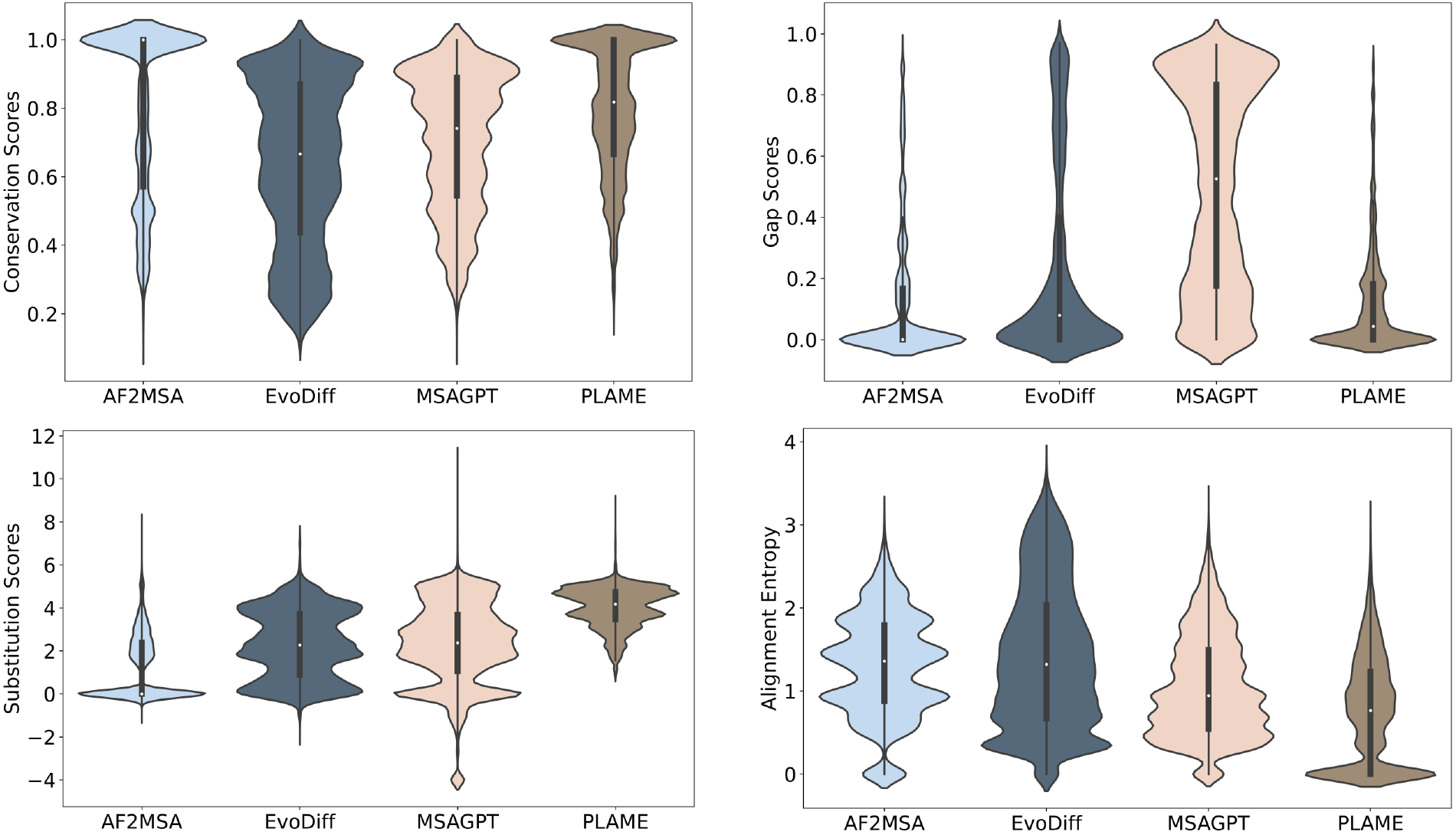}
    \caption{Comparison of sequence-based metrics for AF2 MSAs and MSAs generated by EvoDiff, MSAGPT, and PLAME.}
    \label{fig:seq_metric}
\end{wrapfigure}
This fidelity advantage manifests in higher Conservation Scores and Substitution Compatibility values, indicating that PLAME-generated sequences preserve functionally critical residues while incorporating biologically plausible substitutions. 
The significantly lower Gap Proportion validates PLAME's approach, as the evolutionary latent space from ESM-2 provides richer homology information enabling more complete alignments.

\textbf{PLAME maintains diversity levels comparable to natural AF2 MSAs, supporting our hypothesis that excessive diversity introduces detrimental noise.} Rather than maximizing diversity like EvoDiff, this measured approach aligns with our selection strategy principles, where balanced information enrichment proves more effective than naive sequence proliferation (Section~\ref{subsec:selection}). The findings suggest successful MSA generation requires maintaining the delicate balance between providing sufficient homologous information and avoiding noise from unconstrained sequence generation, positioning PLAME as a method that respects fundamental biological constraints.

\begin{table*}[h]
\centering
\setlength{\tabcolsep}{3.2pt}
\renewcommand{\arraystretch}{1.0}
\caption{Ablation study of HiFiAD on PLAME and other baselines.}
\begin{tabular*}{\textwidth}{@{\extracolsep{\fill}}lcccccccccccc}
\toprule
\multirow{2}{*}{} & \multicolumn{2}{c}{\textbf{pLDDT}}($\uparrow$) & \multicolumn{2}{c}{\textbf{GDT}($\uparrow$)} & \multicolumn{2}{c}{\textbf{TMscore}}($\uparrow$) & \multicolumn{2}{c}{\textbf{RMSD}}($\downarrow$) & \multicolumn{2}{c}{\textbf{LDDT}}($\uparrow$) & \multicolumn{2}{c}{\textbf{pTM}}($\uparrow$) \\
\cmidrule(lr){2-3} \cmidrule(lr){4-5} \cmidrule(lr){6-7} \cmidrule(lr){8-9} \cmidrule(lr){10-11} \cmidrule(lr){12-13}
 & Zero & Few & Zero & Few & Zero & Few & Zero & Few & Zero & Few & Zero & Few \\
\midrule
Random-16 & 63.61 & 62.63 & 0.52 & 0.51 & 0.52 & 0.56 & 12.01 & 12.67 & 0.55 & 0.58 & 0.46 & 0.49 \\
Blosum-8 & 61.04 & 62.71 & 0.5 & 0.52 & 0.51 & 0.57 & 12.53 & 12.69 & 0.55 & 0.58 & 0.45 & 0.50 \\
Blosum-32 & 62.97 & 62.40 & 0.50 & 0.50 & 0.51 & 0.55 & 12.28 & 12.84 & 0.55 & 0.57 & 0.45 & 0.48 \\
Top-Rec-16 & 62.04 & 62.93 & 0.51 & 0.51 & 0.51 & 0.55 & 12.15 & 12.48 & 0.55 & 0.57 & 0.45 & 0.49 \\
Top-down-Rec-16 & 63.43 & 63.10 & 0.52 & 0.52 & 0.51 & 0.57 & 11.97 & 12.15 & 0.55 & 0.58 & 0.46 & 0.49 \\
\midrule
EvoDiff-HiFiAD & 58.24 & 60.89 & 0.46 & 0.49 & 0.46 & 0.54 & 13.74 & 12.39 & 0.51 & 0.56 & / & / \\
MSAGPT-HiFiAD & 60.16 & 62.63 & 0.48 & 0.52 & 0.48 & 0.57 & 12.54 & 12.18 & 0.53 & 0.59 & / & / \\
DHR-HiFiAD & 66.01 & 66.08 & 0.53 & 0.55 & 0.53 & 0.60 & 11.48 & 12.14 & 0.57 & 0.60 & / & / \\
\midrule
\rowcolor{blue!10}
\textbf{PLAME-HiFiAD} & 66.54 & 66.08 & 0.53 & 0.54 & 0.53 & 0.58 & 11.48 & 12.14 & 0.57 & 0.60 & 0.49 & 0.52 \\
\bottomrule
\end{tabular*}
\label{tab:ablation}
\end{table*}

\subsection{Ablation Studies}
To validate our HiFiAD selection strategy, we conducted ablation experiments across different selection approaches and baseline methods. Table~\ref{tab:ablation} compares various selection strategies and evaluates HiFiAD's effectiveness on other generative methods.

\textbf{HiFiAD consistently outperforms alternative selection strategies by optimally balancing fidelity and diversity constraints.} Compared to similarity-based methods (Top/Down-Rec) and substitution matrix approaches (BLOSUM-32), HiFiAD achieves superior performance with pLDDT scores of 66.54 in zero-shot settings, demonstrating the importance of jointly considering evolutionary fidelity and controlled diversity. The strategy effectively identifies high-fidelity sequences while maintaining sufficient diversity to prevent overly deterministic conservation patterns. HiFiAD automatically adapts to varying MSA quality levels and shot configurations, making it robust without requiring manual parameter tuning.

When applied to competing baselines, HiFiAD consistently improves performance: EvoDiff benefits from a 58.24 to 60.89 pLDDT improvement, MSAGPT gains from 60.16 to 62.63, and DHR advances from 66.01 to 66.08. These improvements demonstrate that HiFiAD addresses fundamental challenges in MSA selection across all generative approaches. The consistent gains across different generation paradigms—from diffusion-based (EvoDiff) to autoregressive (MSAGPT) and retrieval-based (DHR) methods—validate that the fidelity-diversity trade-off represents a universal principle in MSA augmentation. The improvement margins correlate with baseline method quality, suggesting HiFiAD provides proportional benefits while maintaining relative performance hierarchy.

Also, we conducted ablation study on MSA length (See Table\ref{tab:length_ablation_add}). PLAME shows overall improvement on all length ranges, where performs the largest improvement on 100-300 range. We believe this is because the MSA training data are mainly concentrated in this range \citep{chen2024msagpt}.

\begin{table}[htbp]
  \centering
  \caption{Ablation on protein length.}
  \begin{tabular}{l|c|ccccc}
    \toprule
    & \textbf{Length Range} & \textbf{pLDDT}($\uparrow$) & \textbf{GDT}($\uparrow$) & \textbf{TMscore}($\uparrow$) & \textbf{RMSD}($\downarrow$) & \textbf{LDDT}($\uparrow$) \\
    \midrule
    \textbf{AF2 MSA} & $<$100 & 71.03 & 0.64 & 0.52 & 7.77 & 0.61 \\
    \textbf{AF2 MSA} & 100-300 & 59.50 & 0.49 & 0.53 & 12.46 & 0.54 \\
    \textbf{AF2 MSA} & $>$300 & 56.29 & 0.43 & 0.51 & 15.67 & 0.53 \\
    \midrule
    \textbf{PLAME} & $<$100 & 74.12 & 0.63 & 0.52 & 7.49 & 0.61 \\
    \textbf{PLAME} & 100-300 & 65.55 & 0.53 & 0.58 & 11.58 & 0.58 \\
    \textbf{PLAME} & $>$300 & 58.31 & 0.45 & 0.53 & 16.16 & 0.54 \\
    \bottomrule
  \end{tabular}
  \label{tab:length_ablation_add}
\end{table}

Furthermore, we provide additional case studies on folding enhancement. More case studies on orphan \textit{de novo} proteins (Section\ref{subsec:case_study}), protein failure cases (Section\ref{subsec:failure_case}), selected protein cases with aligned structures (Section\ref{sec:success_case}) can be found in the appendix.

\begin{figure}[htp]
    \centering
    \includegraphics[width=0.9\textwidth,height=0.8\textheight,keepaspectratio]{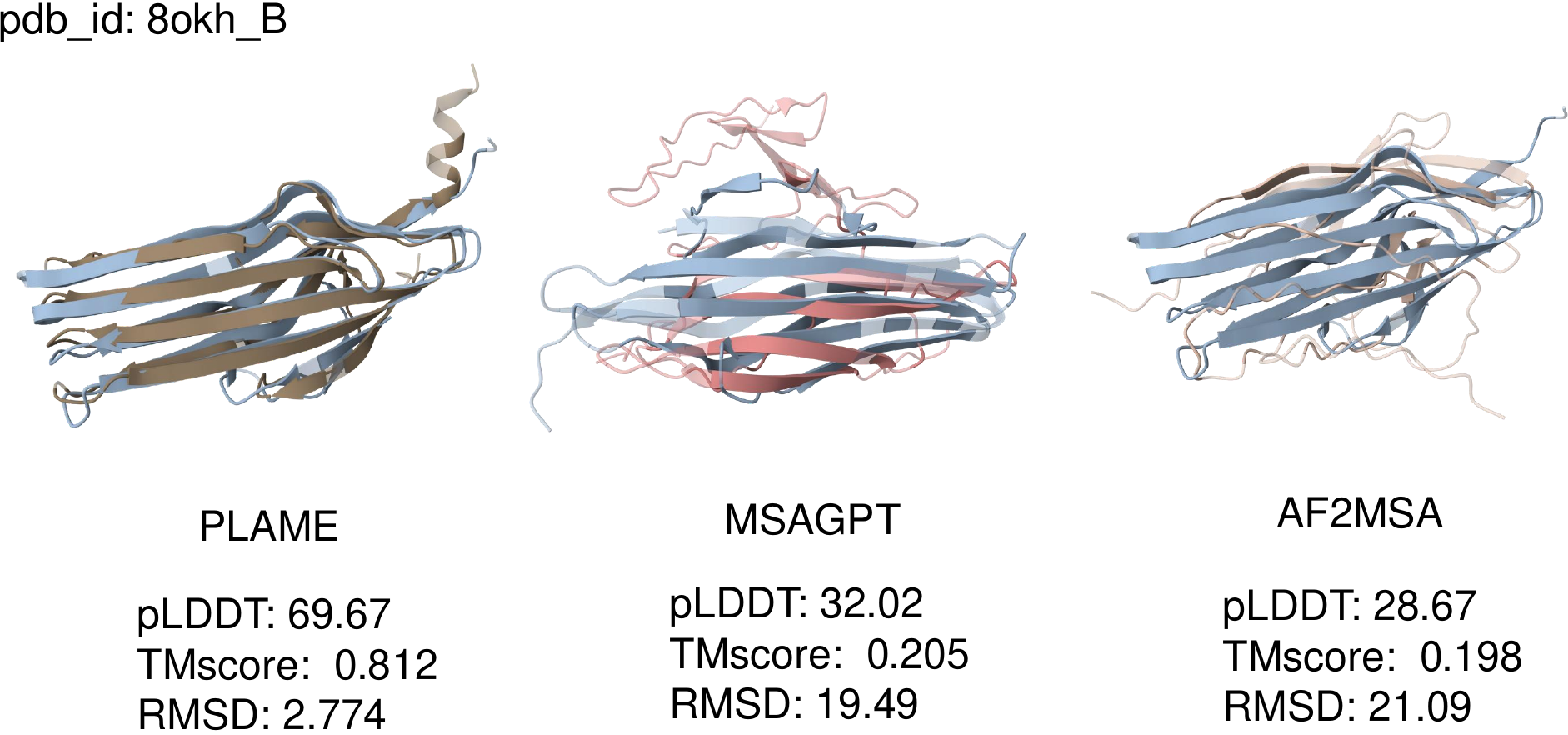}
    \caption{Case study of folding enhancement of PLAME, MSAGPT, and AF2 MSA on 8okh\_B.}
    \label{fig:structure-2}
\end{figure}
\section{Conclusion}
In this study, we introduce PLAME, the first model to leverage evolutionary embeddings for MSA generation and apply it to protein folding enhancement. Our approach bridges the gap between single-sequence inference and MSA-based methods, effectively improving protein folding performance.
Evaluation results demonstrate that PLAME-generated MSAs outperform existing methods in both conservation and diversity metrics, significantly enhancing structural prediction accuracy across different protein families.
PLAME serves as both an MSA enhancer and an efficient AlphaFold adapter without requiring time-consuming MSA searches, providing a fast, accurate, and scalable protein structure prediction solution. Additionally, our proposed quality metrics and experiments offer new insights into the relationship between MSA features and folding performance.

\bibliographystyle{unsrt}
\bibliography{neurips_2025}

%%%%%%%%%%%%%%%%%%%%%%%%%%%%%%%%%%%%%%%%%%%%%%%%%%%%%%%%%%%%

\appendix
\newpage
\section{Proof of Theorem}
We provide additional statements to demonstrate the superiority of the Conservation-Diversity Training Loss. Firstly, we demonstrate that the PCE Loss as a conservation-aware weighted loss by position in the perspective of MSA profiles.
\label{app:proof}
\begin{lemma}
Let $P(l,a)$ be the empirical amino–acid distribution for residue $a\!\in\!\mathcal A$, and let $Q_\theta(l,a)$ denote the model distribution at the residue (i.e.\ the conditional probability $p_\theta(a\mid y_{<l})$ after taking expectation over prefixes). Assign each column a weight $w_l\!\in[\,1-\delta,\,1+\delta\,]$ obtained from its conservation score.
Then PCE loss directs optimization preferentially toward conserved
positions by minimizing a weighted KL divergence and scaling gradient
magnitudes in proportion to~$w_l$.
\end{lemma}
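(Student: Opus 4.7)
The plan is to show that, up to a constant that does not depend on the parameters $\theta$, the PCE loss equals a sum of per-column Kullback--Leibler divergences each multiplied by the conservation weight $w_l$, and then to inspect the gradient directly to confirm that $w_l$ acts as a multiplicative scaling on every per-column contribution.

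First I would interpret the inner sum in the PCE loss, $\frac{1}{N}\sum_{j=1}^{N}\log p_\theta(y_l^{(j)}\mid y_{<l}^{(j)})$, as a Monte-Carlo estimate of the data expectation $\mathbb{E}_{(y_{<l},y_l)\sim P}[\log p_\theta(y_l\mid y_{<l})]$. Using the stated definition $Q_\theta(l,a)=\mathbb{E}_{y_{<l}}[p_\theta(a\mid y_{<l})]$ and marginalizing out the prefix, each column contributes a weighted cross-entropy $w_l\cdot\bigl[-\sum_{a\in\mathcal{A}} P(l,a)\log Q_\theta(l,a)\bigr]$, so the full loss becomes $\mathcal L_{\text{PCE}}=\sum_l w_l\cdot H\!\left(P(l,\cdot),Q_\theta(l,\cdot)\right)$ up to the Monte-Carlo error.

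Second I would invoke the standard decomposition $H(P,Q)=H(P)+D_{\text{KL}}(P\,\|\,Q)$ column by column. Because the entropy $H(P(l,\cdot))$ depends only on the empirical profile, $\nabla_\theta H(P(l,\cdot))=0$; dropping these constants identifies the optimization target as $\sum_{l} w_l\, D_{\text{KL}}\!\left(P(l,\cdot)\,\|\,Q_\theta(l,\cdot)\right)$, which is the first half of the lemma. Third, I would differentiate the original objective to obtain $\nabla_\theta\mathcal L_{\text{PCE}}=-\frac{1}{N}\sum_{j}\sum_{l} w_l\,\nabla_\theta\log p_\theta(y_l^{(j)}\mid y_{<l}^{(j)})$; since $w_l\in[1-\delta,1+\delta]$ is read off from the column conservation, the per-token gradient signal at a highly conserved column exceeds that at a variable column by a factor of up to $\tfrac{1+\delta}{1-\delta}$, which is the claimed gradient-scaling property.

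The main obstacle is the identification of the empirical per-prefix log-likelihood with a KL divergence expressed in terms of the \emph{marginal} $Q_\theta(l,\cdot)$: a direct equality would require $p_\theta(\cdot\mid y_{<l})$ to coincide with its prefix-marginal, which is not generally true. I plan to handle this by first proving the exact conditional statement $\sum_l w_l\,\mathbb{E}_{y_{<l}}\!\left[D_{\text{KL}}\!\left(P(\cdot\mid y_{<l})\,\|\,p_\theta(\cdot\mid y_{<l})\right)\right]$, and then obtaining the marginal form in the lemma via Jensen's inequality (convexity of $D_{\text{KL}}$ in its second argument), which yields it as an upper bound whose minimization still concentrates on columns with large $w_l$. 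The gradient-scaling claim, by contrast, is immediate from the definition of $\mathcal L_{\text{PCE}}$ and needs no such reduction.
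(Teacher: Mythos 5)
Your proposal is correct and follows essentially the same route as the paper's proof: decompose the (expected) weighted cross-entropy column by column into $w_l\,H\bigl(P(l,\cdot)\bigr)+w_l\,\KL\bigl(P(l,\cdot)\Vert Q_\theta(l,\cdot)\bigr)$, discard the $\theta$-independent entropy terms, and read off the gradient of the per-column loss as $w_l$ times the unweighted gradient. The one place you genuinely go beyond the paper is the conditional-versus-marginal issue: the paper simply asserts that the expected cross-entropy equals $-\sum_l\sum_a P(l,a)\log Q_\theta(l,a)$ with $Q_\theta(l,a)=\mathbb{E}_{y_{<l}}\bigl[p_\theta(a\mid y_{<l})\bigr]$, which is exact only for a prefix-independent (profile) model; in general the expected loss is $\sum_l w_l\,\mathbb{E}_{y_{<l}}\bigl[\KL\bigl(P(\cdot\mid y_{<l})\Vert p_\theta(\cdot\mid y_{<l})\bigr)\bigr]$ plus a constant, exactly the obstacle you identify. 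Your Jensen step is the right repair, with one small correction: since \emph{both} arguments of the divergence vary with the prefix, you need the joint convexity of $\KL$ in the pair $(P,Q)$ (log-sum inequality), not merely convexity in the second argument; this gives $\KL\bigl(P(l,\cdot)\Vert Q_\theta(l,\cdot)\bigr)\le\mathbb{E}_{y_{<l}}\bigl[\KL\bigl(P(\cdot\mid y_{<l})\Vert p_\theta(\cdot\mid y_{<l})\bigr)\bigr]$, so the true objective upper-bounds the weighted marginal KL and its minimization still pushes the marginal profile toward $P$ preferentially at large-$w_l$ columns. The gradient-scaling half of your argument is identical to the paper's (per-column gradient equals $w_l$ times the unweighted one, hence a relative amplification of up to $(1+\delta)/(1-\delta)$ between conserved and variable columns) and needs no such care.
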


\begin{proof}
For a sufficiently large set of $N$ homologous sequences sampled from
$P$, the expected cross-entropy loss is
\begin{equation}
\mathbb{E}[\LCE]=
-\sum_{l=1}^{L}\sum_{a\in\mathcal A}
  P(l,a)\,\log Q_\theta(l,a).
\end{equation}
Re-expressing each column term as
$-\sum_a P\log Q =
 H\!\bigl(P(l,\!\cdot)\bigr)+
 \KL\!\bigl(P(l,\!\cdot)\Vert Q_\theta(l,\!\cdot)\bigr)$,
we obtain
\begin{equation}
\mathbb{E}[\LCE]=
\sum_{l=1}^{L}\KL\!\bigl(P(l,\!\cdot)\Vert Q_\theta(l,\!\cdot)\bigr)
+\sum_{l=1}^{L}H\!\bigl(P(l,\!\cdot)\bigr).
\end{equation}

For the PCE loss,
\begin{equation}
\mathbb{E}[\LPCE]=
-\sum_{l=1}^{L}w_l
  \sum_{a\in\mathcal A}P(l,a)\,\log Q_\theta(l,a),
\end{equation}
which can analogously be rewritten as the position-wise weighted KL
\begin{equation}
\mathbb{E}[\LPCE]=
\sum_{l=1}^{L}w_l\,
  \KL\!\bigl(P(l,\!\cdot)\Vert Q_\theta(l,\!\cdot)\bigr)
+\sum_{l=1}^{L}w_l\,H\!\bigl(P(l,\!\cdot)\bigr).
\end{equation}

Let $\theta$ denote the model parameters.  
The gradient of the CE loss for column $l$ is
\begin{equation}
\frac{\partial \mathcal{L}_{\text{PCE}, l}}{\partial\theta}=
-\sum_{a\in\mathcal A}
  P(l,a)\,\frac{1}{Q_\theta(l,a)}
  \frac{\partial Q_\theta(l,a)}{\partial\theta}.
\end{equation}
For PCE the gradient is simply scaled by $w_l$:
\begin{equation}
\frac{\partial\mathcal{L}_{\text{PCE}, l}}{\partial\theta}=
-w_l\sum_{a\in\mathcal A}
     P(l,a)\,\frac{1}{Q_\theta(l,a)}
     \frac{\partial Q_\theta(l,a)}{\partial\theta}
= w_l\,
  \frac{\partial\mathcal{L}_{\text{CE}, l}}{\partial\theta}.
\end{equation}

Consequently, in highly conserved columns the gradient magnitude is amplified by
$1+\delta$, whereas in variable columns ($w_l\approx1-\delta$) it is attenuated, focusing optimization effort on conserved regions.
\end{proof}
Based on the understanding of the PCE Loss, we then demonstrate that PCE Loss is expected to capture evolutionary information (MSA profile) with less error--measured by KL-Divergence.
\begin{theorem}
Let $P(l,a)$ be the true amino–acid distribution in column $l\;(l=1,\dots,L)$ of an MSA and
let $Q_{\theta}(l,a)$ be the distribution produced by a parametrised generative model
$Q_\theta$.  
Denote the column–wise Kullback–Leibler divergence by
\begin{equation}
\KL\!\bigl(P(l,\cdot)\,\Vert\,Q_{\theta}(l,\cdot)\bigr)
      = \sum_{a\in\mathcal A} P(l,a)\,
        \log\frac{P(l,a)}{Q_{\theta}(l,a)} .
\end{equation}
Let  
\begin{equation}
\theta^\star_{\mathrm{CE}}
    =\arg\min_\theta \LCE(\theta),\qquad
\theta^\star_{\mathrm{PCE}}
    =\arg\min_\theta \LPCE(\theta).
\end{equation}

Define the \emph{average profile KL divergence}
\begin{equation}
\DKLavg(\theta) :=
     \frac1L\sum_{l=1}^{L}
     \KL\!\bigl(P(l,\cdot)\,\Vert\,Q_{\theta}(l,\cdot)\bigr).
\end{equation}

Under the assumption that both optimization problems are solved to global
optimality, the model trained with PCE Loss captures the MSA profile with less divergence $\DKLavg$:
\begin{equation}
 \DKLavg(\theta^\star_{\mathrm{PCE}})
 \;\le\;
 \DKLavg(\theta^\star_{\mathrm{CE}})
\end{equation}
\end{theorem}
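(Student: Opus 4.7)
The plan is to build directly on the decomposition already established in the preceding Lemma. Taking expectations over the empirical column distributions $P(l,\cdot)$, we have
\begin{equation}
\mathbb{E}[\LCE(\theta)] \;=\; \sum_{l=1}^{L}\KL\!\bigl(P(l,\cdot)\,\Vert\,Q_\theta(l,\cdot)\bigr) \;+\; C_{\text{CE}},
\end{equation}
\begin{equation}
\mathbb{E}[\LPCE(\theta)] \;=\; \sum_{l=1}^{L} w_l\,\KL\!\bigl(P(l,\cdot)\,\Vert\,Q_\theta(l,\cdot)\bigr) \;+\; C_{\text{PCE}},
\end{equation}
where $C_{\text{CE}}=\sum_l H(P(l,\cdot))$ and $C_{\text{PCE}}=\sum_l w_l H(P(l,\cdot))$ are $\theta$-independent constants. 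The first step is therefore to note that minimizing $\LCE$ and $\LPCE$ reduces to minimizing an unweighted and a $w_l$-weighted sum of column-wise KL divergences, respectively.

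Next I would invoke the global-optimality hypothesis of the theorem in its strongest reasonable form, namely realizability: the parametric family $\{Q_\theta\}$ is expressive enough that there exists $\theta^\dagger$ with $Q_{\theta^\dagger}(l,\cdot)=P(l,\cdot)$ for every $l$. Under realizability, $\theta^\dagger$ makes every column-wise KL vanish and thus attains the information-theoretic lower bound of both objectives simultaneously. Since $w_l\in[1-\delta,1+\delta]$ with $\delta<1$ guarantees strict positivity of every weight, any other point that leaves some column-wise KL strictly positive strictly increases both the unweighted and the weighted sums. Consequently both $\theta^\star_{\mathrm{CE}}$ and $\theta^\star_{\mathrm{PCE}}$ must also satisfy $\KL(P(l,\cdot)\Vert Q_{\theta^\star}(l,\cdot))=0$ for all $l$, giving $\DKLavg(\theta^\star_{\mathrm{CE}})=\DKLavg(\theta^\star_{\mathrm{PCE}})=0$, and the claimed inequality holds with equality.

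The main obstacle I expect is precisely the interpretation of global optimality. If realizability fails, the conclusion is in fact not free: $\theta^\star_{\mathrm{CE}}$ by construction minimizes $\sum_l \KL_l$, so in a capacity-constrained regime CE would achieve the lower average KL, not PCE. To preserve the direction of the inequality without realizability, one would need a supplementary structural hypothesis, for instance that the conserved (high-$w_l$) columns are the ones the model can fit essentially exactly while only variable columns incur residual KL, so that weighting up the former leaves the overall unweighted sum unchanged while concentrating capacity usefully. I would therefore present the proof under the realizability reading, then add a short remark flagging that strict superiority of PCE in the non-realizable case requires such an extra assumption and is consistent with the gradient-rescaling intuition of the preceding Lemma, where PCE amplifies updates on conserved columns by a factor $1+\delta$ and attenuates them by $1-\delta$ on variable ones.
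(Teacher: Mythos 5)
Your proposal is correct as far as the mathematics goes, and it takes a genuinely different route from the paper --- one that is, in fact, more careful. Both you and the paper start from the same decomposition $\LCE(\theta)=C_0+\sum_l \KL\bigl(P(l,\cdot)\Vert Q_\theta(l,\cdot)\bigr)$ and $\LPCE(\theta)=C_w+\sum_l w_l\,\KL\bigl(P(l,\cdot)\Vert Q_\theta(l,\cdot)\bigr)$. From there the paper argues via the sandwich bound $(1-\delta)\,L\,\DKLavg(\theta)\le D_w(\theta)\le(1+\delta)\,L\,\DKLavg(\theta)$ and the fact that $\Delta_w:=D_w(\theta^\star_{\mathrm{CE}})-D_w(\theta^\star_{\mathrm{PCE}})\ge 0$, concluding from $(1-\delta)\bigl[\DKLavg(\theta^\star_{\mathrm{CE}})-\DKLavg(\theta^\star_{\mathrm{PCE}})\bigr]\le \Delta_w/L$ that the claimed inequality holds. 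That last step is a non sequitur: an \emph{upper} bound on the difference by a nonnegative quantity gives no information about its sign. You instead invoke realizability, under which both optima drive every column-wise KL to zero and the statement holds with equality; this is a valid (if degenerate) reading of the theorem. More importantly, your closing remark identifies the real issue: since $\theta^\star_{\mathrm{CE}}$ is by definition the global minimizer of $\sum_l\KL_l$, i.e.\ of $L\cdot\DKLavg$, one always has $\DKLavg(\theta^\star_{\mathrm{CE}})\le\DKLavg(\theta^\star_{\mathrm{PCE}})$ under the theorem's own global-optimality hypothesis --- the \emph{reverse} of the claimed inequality --- so the theorem can only ever hold with equality, and strict superiority of PCE on the unweighted profile KL is impossible without additional structural assumptions of the kind you sketch. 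Your approach buys a correct (equality-case) proof plus an accurate diagnosis of when the statement fails; the paper's approach buys nothing beyond the lemma it restates, because its final implication does not follow.
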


\begin{proof}
Rewrite two losses in the form of KL-Divergence 
$\sum_{a}P\log Q = H\bigl(P(l,\cdot)\bigr)+\KL\!\bigl(P(l,\cdot)\Vert Q_{\theta}(l,\cdot)\bigr)$,
we have:
\begin{equation}
\begin{aligned}
\LCE(\theta)
&= C_0 + \sum_{l=1}^{L}
      \KL\!\bigl(P(l,\cdot)\Vert Q_{\theta}(l,\cdot)\bigr),\\[4pt]
\LPCE(\theta)
&= C_w + \sum_{l=1}^{L}
      w_l\,
      \KL\!\bigl(P(l,\cdot)\Vert Q_{\theta}(l,\cdot)\bigr),
\end{aligned}
\end{equation}
where $C_0=\sum_l H(P(l,\cdot))$ and $C_w=\sum_l w_l\,H(P(l,\cdot))$ are constants independent of~$\theta$. Hence minimizing $\LPCE$ is equivalent to minimizing the \emph{weighted} KL
\begin{equation}
    D_w(\theta):=\sum_{l=1}^{L} w_l\,
             \KL\!\bigl(P(l,\cdot)\Vert Q_{\theta}(l,\cdot)\bigr),
\qquad
\theta^\star_{\mathrm{PCE}}=\arg\min_\theta D_w(\theta).
\end{equation}

Then, since every $w_l$ is bounded, we can establish the relations:
\begin{equation}
(1-\delta)\sum_{l=1}^{L}
     \KL\!\bigl(P(l,\cdot)\Vert Q_{\theta}(l,\cdot)\bigr)
\;\le\;
     D_w(\theta)
\;\le\;
(1+\delta)\sum_{l=1}^{L}
     \KL\!\bigl(P(l,\cdot)\Vert Q_{\theta}(l,\cdot)\bigr).
\end{equation}
Dividing by $L$ gives: 
\begin{equation}
(1-\delta)\,\DKLavg(\theta)
\;\le\;
\frac{D_w(\theta)}{L}
\;\le\;
(1+\delta)\,\DKLavg(\theta).
\tag{$\ast$}
\end{equation}

Based on the fact that $\theta^\star_{\mathrm{PCE}}$ minimizes $D_w$, denote 
$\Delta_w := D_w(\theta^\star_{\mathrm{CE}}) - D_w(\theta^\star_{\mathrm{PCE}})\ge 0$.
By applying $(\ast)$ to both optimal parameters and subtracting, we obtain:
\begin{equation}
(1-\delta)\Bigl[
  \DKLavg(\theta^\star_{\mathrm{CE}})-
  \DKLavg(\theta^\star_{\mathrm{PCE}})
\Bigr]
\;\le\;
\frac{\Delta_w}{L}.
\end{equation}
Since $\Delta_w\ge 0$ and $1-\delta>0$; it is strictly positive whenever $\Delta_w>0$,
Therefore, 
\begin{equation}
\DKLavg(\theta^\star_{\mathrm{PCE}})
\;\le\;
\DKLavg(\theta^\star_{\mathrm{CE}}),
\end{equation}
which completes the proof.
\end{proof}
A natural challenge emerges when applying the PCE Loss—the model tends to accurately capture the distribution of conserved regions while neglecting the distribution of variable regions. To address this issue, we demonstrate that the DIRE Loss effectively enhance the modeling in the variable regions.

\begin{theorem}
For $l=1,\dots,L$ let $P(l,a)$ denote the empirical amino-acid
distribution and $Q_{\theta}(l,a)$ any model. 
When each amnio acid site is optimized independently, the minimizer is
\begin{equation}
Q_{\alpha}^{\star}(l,a)=
\frac{P(l,a)^{\tau_l}}
     {\displaystyle\sum_{b\in\mathcal A}P(l,b)^{\tau_l}},
\qquad
\tau_l=\frac{\alpha w_l}{\alpha w_l+(1-\alpha)}\in(0,1).
\end{equation}
Moreover,
\begin{equation}
H\!\bigl(P(l,\cdot)\bigr)\le
H\!\bigl(Q_{\alpha}^{\star}(l,\cdot)\bigr)\le\log|\mathcal A|,
\end{equation}
with the entropy increase largest when $w_l$ is small (variable regions). Thus $\LDIRE$ counter-acts the entropy suppression of $\LPCE$ and serves as a principled regularizer on variable regions.
\end{theorem}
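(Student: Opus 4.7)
The plan is to recognise the per-column combined loss as a convex minimisation on the probability simplex whose minimiser can be read off by Lagrange multipliers, and then establish the entropy bounds from properties of tempered distributions. Because the statement assumes that each site is optimised independently, the column-wise decomposition of the expected objective reduces the problem to minimising, for each $l$, the functional
\begin{equation}
F_l(Q)=-\alpha w_l\sum_{a\in\mathcal A}P(l,a)\log Q(a)+(1-\alpha)\sum_{a\in\mathcal A}Q(a)\log Q(a)
\end{equation}
subject to $\sum_a Q(a)=1$ and $Q(a)\ge 0$. I would form the Lagrangian $F_l(Q)+\lambda\bigl(\sum_a Q(a)-1\bigr)$, differentiate with respect to $Q(a)$, and manipulate the stationarity condition to isolate $\log Q(a)$. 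The expected outcome is $\log Q(a)=\tau_l\log P(l,a)+c$, with $c$ independent of $a$ and $\tau_l=\alpha w_l/(\alpha w_l+(1-\alpha))\in(0,1]$. Exponentiating and normalising yields $Q^\star_\alpha(l,a)=P(l,a)^{\tau_l}/Z_l$ with $Z_l=\sum_b P(l,b)^{\tau_l}$, and strict convexity of $F_l$ on the relative interior of the simplex secures uniqueness.

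With the tempered form in hand, the entropy bounds follow from classical facts about temperature-scaled distributions. Direct computation shows $H\bigl(P^\tau/Z_\tau\bigr)=-\tau\,\mathbb{E}_{P^\tau/Z_\tau}[\log P]+\log Z_\tau$ and hence $\frac{d}{d\tau}H(P^\tau/Z_\tau)=-\tau\,\mathrm{Var}_{P^\tau/Z_\tau}(\log P)\le 0$, so the entropy is non-increasing in $\tau$ on $(0,1]$. Applying this at $\tau_l\le 1$ gives $H(Q^\star_\alpha(l,\cdot))\ge H(P(l,\cdot))$; the upper bound $\log|\mathcal A|$ is Shannon's bound, approached in the uniform limit $\tau_l\to 0$. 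A short calculation gives $\partial\tau_l/\partial w_l=(1-\alpha)\alpha/\bigl(\alpha w_l+(1-\alpha)\bigr)^{2}>0$, so $\tau_l$ shrinks as $w_l$ shrinks and the entropy gap $H(Q^\star_\alpha)-H(P)$ is largest at variable columns where $w_l\approx 1-\delta$. This yields the interpretation that $\LDIRE$ regularises more aggressively at variable sites, compensating for the entropy suppression produced by $\LPCE$.

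The main obstacle I anticipate is the algebraic reduction of the stationarity equation to the compact $\log Q=\tau_l\log P+c$ form. A direct Lagrange calculation yields a condition that mixes a $P(a)/Q(a)$ contribution from the PCE term with a $\log Q(a)$ contribution from DIRE, which do not combine into a single logarithmic equation as written. The cleanest route I see is to rewrite the expected PCE using $-\sum_a P\log Q=\KL(P\Vert Q)+H(P)$, discard the $Q$-independent $H(P)$, and then work with the variationally equivalent problem $\min_Q\bigl[\alpha w_l\,\KL(Q\Vert P)-(1-\alpha)H(Q)\bigr]$. The minimiser of the latter is a Boltzmann/Gibbs distribution by standard entropy-regularised optimisation (Donsker--Varadhan duality), and its exponent works out to exactly $\tau_l$; justifying that this reverse-KL recasting preserves the minimiser over the simplex is the principal technical step, after which the remainder of the argument is routine.
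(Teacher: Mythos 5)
You set up the same per-column functional as the paper,
$F_l(Q)=-\alpha w_l\sum_a P(l,a)\log Q(a)+(1-\alpha)\sum_a Q(a)\log Q(a)$,
and your instinct about where the difficulty lies is exactly right: the stationarity condition for this $F_l$ is
\begin{equation}
-\frac{\alpha w_l\,P(l,a)}{Q(a)}+(1-\alpha)\bigl(1+\log Q(a)\bigr)+\lambda=0,
\end{equation}
which mixes a $P/Q$ term with a $\log Q$ term and is a transcendental (Lambert-$W$-type) equation in $Q(a)$; its solution is \emph{not} of the form $Q\propto P^{\tau}$. (Substituting $Q(a)=P(l,a)^{\tau_l}/Z$ into the display above produces a term proportional to $P(l,a)^{1-\tau_l}$ alongside a term proportional to $\log P(l,a)$, which cannot cancel for all $a$ unless $P(l,\cdot)$ is uniform.) The paper's own proof writes down precisely this stationarity condition and then asserts the tempered solution without further argument, so the step you flag as the ``principal technical step'' is in fact the point at which the published argument is also incomplete.

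However, the repair you propose does not close the gap. Replacing $\KL(P\Vert Q)$ by $\KL(Q\Vert P)$ is not a recasting: after discarding the $Q$-independent $H(P)$, the objective is $\alpha w_l\,\KL(P\Vert Q)-(1-\alpha)H(Q)$ with the \emph{forward} KL, and forward and reverse KL are different functionals of $Q$ with different minimizers once the entropy regularizer is added. The tempered distribution $Q\propto P^{\tau_l}$ with $\tau_l=\alpha w_l/(\alpha w_l+1-\alpha)$ is the exact minimizer of $\alpha w_l\,\KL(Q\Vert P)-(1-\alpha)H(Q)$ — there both terms are linear in $\log Q$ under $Q$-expectation, so the Lagrange condition collapses to $\log Q=\tau_l\log P+c$ — but that is a different optimization problem from the one induced by $\alpha\LPCE+(1-\alpha)\LDIRE$. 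There is no justification to supply for the swap; the swap changes the answer, and the closed form in the theorem is only correct under a reverse-KL reading of the conservation term, which is not what the cross-entropy loss computes. By contrast, the second half of your argument is sound and actually stronger than the paper's: granting the tempered form, the identity $\frac{d}{d\tau}H(P^{\tau}/Z_{\tau})=-\tau\,\mathrm{Var}_{P^{\tau}/Z_{\tau}}(\log P)\le 0$ gives the entropy monotonicity rigorously, and $\partial\tau_l/\partial w_l>0$ cleanly delivers the ``largest gain at variable columns'' claim that the paper merely asserts.
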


\begin{proof}
Since the combined loss $\mathcal L_{\alpha}$ sums over amino acid positions, we may analyze a single site independently, denoting $P(a)=P(l,a)$, $Q(a)=Q(l,a)$ and $w=w_l$. For each site we minimize, we have
\begin{equation}
F(Q)=\alpha w\sum_a P(a)\log\frac{P(a)}{Q(a)}
      + (1-\alpha)\sum_a Q(a)\log Q(a),
\end{equation}
subject to the normalization constraint $\sum_a Q(a)=1$.  

Introducing a Lagrange multiplier $\lambda$ and setting the derivative with respect to $Q(a)$ to zero yields
\begin{equation}
-\frac{\alpha w P(a)}{Q(a)} \;+\; (1-\alpha)\bigl(1+\log Q(a)\bigr) 
\;+\; \lambda \;=\; 0 .
\end{equation}
Solving this equation reveals a "temperature-like" solution based on $\tau$:
\begin{equation}
Q(a) \;\propto\; P(a)^{\tau}, 
\qquad 
\tau \;=\; \frac{\alpha w}{\alpha w + (1-\alpha)} \in (0,1),
\end{equation}
which is exactly the optima $Q_{\alpha}^{\star}(l,\cdot)$ mentioned earlier.  

Since $0<\tau<1$, this transformation always increases entropy unless $P$ is already uniform:
\begin{equation}
H\!\bigl(P(l,\cdot)\bigr) \;\le\; H\!\bigl(Q_{\alpha}^{\star}(l,\cdot)\bigr)
\;\le\; \log|\mathcal A|.
\end{equation}
The entropy gain is larger when $w$ is small (in the variable regions). Consequently, the $(1-\alpha),\LDIRE$ term counteracts the over-confidence induced by $\LPCE$ in variable regions, serving as an adaptive entropy-based regularizer.
\end{proof}

\section{Training and Sampling Details}
\label{sec:detail}
\paragraph{Training Details}
We trained our model based on a Transformer T5 architecture, incorporating axial attention and task-specific modifications to enhance performance. The model consists of 12 encoder layers and 12 decoder layers, with a hidden size of 1024, 12 attention heads, and a feedforward dimension of 2048. The feedforward projection employs a gated-GELU activation function.
During training, we employed the AdamW optimizer with a learning rate of 5e-5, a weight decay of 1e-5, and a polynomial decay scheduler with a 1\% warmup ratio. Training was conducted on four NVIDIA A40 GPUs for up to 200,000 steps, with a batch size of 4 per device for both training and evaluation.

\paragraph{Sampling details}
The sampling process was configured with the following parameters: we generate 16 MSAs for 4 trials per generation. The sampling used a repetition penalty of 1.0, a temperature of 1.0, and top-p sampling with a threshold of 0.95. Beam search was performed with 4 beams and 1 beam group. Sampling was executed on an A40 GPU.

\section{Comparison on Inference Speed and Memory Usage}
To further demonstrate PLAME's efficiency, we calculated the inference time and memory cost of each method. We used ENZYME 1.2.1.50 (EC Number) with length 488 as the test case. The results show that PLAME achieved the fastest speed among all AI-based methods while consuming only 4.5GB of memory. The processing speed is comparable to traditional methods like MMSeq2 and AI-based retrieval methods like DHR.
Compared to retrieval-based methods, PLAME does not require downloading or building databases in advance, nor does it need preprocessing steps. This makes it more lightweight and efficient for deployment.
\begin{table}[htbp]
  \centering
  \begin{tabular}{l|c|ccccc}
    \toprule
    \textbf{Method} & \textbf{Time per MSA (s)} & \textbf{GPU Memory (Gb)}  \\
    \midrule
    \textbf{PLAME} & 0.10 & 4.5 \\
    \textbf{DHR} & 0.16 + 358.61 (Alignment) & 1.9 \\
    \textbf{MMSeq2} & 0.48 & 0.0 \\
    \textbf{MSAGPT} & 62.46 & 41.6 \\
    \textbf{EvoDiff} & 478.24 & 4.0 \\
    \bottomrule
  \end{tabular}
  \caption{Comparison on inference speed and memory.}
  \label{tab:speed}
\end{table}

\section{Extensive Case Studies}
\label{sec:success_case}
\subsection{Case Study on Successful Designs}
To further explore the key pattern of the MSA augmentation, we provide a series of sequence and structure visualization in Appendix \ref{app:vis}. We select representative cases collected from different datasets and range from different lengths to comprehensive evaluate the samples. 

Among these cases, we can generally observe that most generated MSA sequences maintain high similarity with the query sequence. Furthermore, the generated MSAs provide good enhancement at the originally conserved sites. This indicates that protein language models can still retain some evolutionary information even for proteins with low homology, although the diversity they can provide is more limited due to homology constraints.

Additionally, we identified several patterns in the sampled MSAs that clearly deviate from the original distribution, such as consecutive gaps (in 8ehb\_F), repeated HHHHHH sequences (in 8okw\_B), and repeated SSSSSSSSS (in 7xrl\_A). We believe these erroneous generations are related to the autoregressive generation method, where the model tends to produce excessive hallucinations after getting trapped in incorrect local probability distributions. We also observed that these failure patterns occur more frequently in longer sequences, possibly due to insufficient training on cases with greater length.  These represent an area requiring further improvement.

% \subsection{AlphaFold3 Folding Experiment}
% To further validate PLAME is an universal MSA generation framework for different folding software, we conduct extensive experiment on AlphaFold3 \citep{abramson2024accurate}. We use the same generated MSAs filtered by HiFiAD, augmenting the original MSAs for folding enhancement. 
% \begin{table}[htbp]
%   \centering
%   \begin{tabular}{l|cccccc}
%     \toprule
%     & \textbf{pLDDT} & \textbf{GDT} & \textbf{TMscore} & \textbf{RMSD} & \textbf{LDDT} & \textbf{pTM} \\
%     \midrule
%     \textbf{AF2 MSA} & 68.872 & 0.578 & 0.596 & 10.818 & 0.617 & 0.529 \\
%     \textbf{PLAME} & 70.887 & 0.578 & 0.595 & 10.740 & 0.623 & 0.539 \\
%     \bottomrule
%   \end{tabular}
%   \caption{Comparison of folding enhancement based on AlphaFold3}
%   \label{tab:af3}
% \end{table}
% Results follow the same trend as AF2 Mode 3. The stronger the folding baseline, the smaller the performance gain. Improvements are mainly in pLDDT and RMSD, indicating our enhanced MSA primarily improves local structural regions while preserving the global protein architecture.

\subsection{Folding Enhancement on Average Proteins}
To probe the effectiveness of PLAME on average proteins, we firstly build a dataset from PDB validation set with 36 proteins. These protein MSAs don't have sequence similarity over 90\% compared to the PLAME training set. We randomly employ 32 MSAs for each protein and augment them with designed MSAs after HiFiAD filtering. The results are shown in Table \ref{tab:avg_protein}.  
\begin{table}[htbp]
  \centering
  \begin{tabular}{l|cccccc}
    \toprule
    & \textbf{pLDDT} & \textbf{GDT} & \textbf{TMscore} & \textbf{RMSD} & \textbf{LDDT} & \textbf{pTM} \\
    \midrule
    \textbf{AF2 MSA} & 83.156 & 0.767 & 0.785 & 5.243 & 0.753 & 0.718 \\
    \textbf{PLAME} & 83.328 & 0.775 & 0.795 & 5.028 & 0.757 & 0.723 \\
    \bottomrule
  \end{tabular}
  \caption{Comparison of folding enhancement on average proteins}
  \label{tab:avg_protein}
\end{table}
From the experimental results, the effects of augmentation align with our initial assumptions, demonstrating modest improvements. While the overall topological structure remains unchanged, minor adjustments can be observed in the structural details. As reported in MSAGPT, performance gains approach saturation between 16 and 32 augmentations. The relatively small improvements observed when applying our method to the average protein MSA can be attributed to the fact that these original MSAs already provide sufficient evolutionary information to AlphaFold2's MSA Transformer, thus limiting the potential impact of additional augmentation.

\subsection{Further Ablation on MSA Filtering}
We further validate the effectiveness of filtered high-quality MSAs by comparing the performance with the more randomly selected MSAs (64 for each protein). 
\begin{table}[htbp]
  \centering
  \begin{tabular}{l|cccccc}
    \toprule
    & \textbf{pLDDT} & \textbf{GDT} & \textbf{TMscore} & \textbf{RMSD} & \textbf{LDDT} & \textbf{pTM} \\
    \midrule
    \textbf{More Random MSAs} & 63.620 & 0.512 & 0.533 & 12.692 & 0.563 & 0.473\\
    \textbf{HiFiAD} & 66.349 & 0.534 & 0.553 & 11.755 & 0.581 & 0.506 \\
    \bottomrule
  \end{tabular}
  \caption{Comparison of folding enhancement based on different filterings.}
  \label{tab:ablation_sup}
\end{table}
From Table \ref{tab:ablation_sup} and \ref{tab:ablation}, We can observe a slight performance enhancement compared to Random-16 filtering approach according to pLDDT and LDDT. Conversely, the performance on global metric decreases. From the results, more co-evolutionary information may lead to better local geometric conformation, but it will disturb the modeling of the global conformations due to the bias during generation.

\subsection{Failure Case Analysis}
\label{subsec:failure_case}
Other than analyzing successful cases, we analyzed four representative failure cases (3bog\_B, 7sxb\_A, 8gzu\_AN, 8gzu\_T3) with the largest performance drops, which includes three zero-shot and one few-shot examples.
From the detailed results, we observe a clear mismatch between global metric, including GDT, TMScore, and RMSD, and local metric, including pLDDT, LDDT, and pTM on 3bog\_B and 8gzu\_T3. It is consistent with the metric discrepancies we observed in the main experiment. 

\begin{table}[htbp]
  \centering
  \begin{tabular}{l|cccccc}
    \toprule
    & \textbf{pLDDT} & \textbf{GDT} & \textbf{TMscore} & \textbf{RMSD} & \textbf{LDDT} & \textbf{pTM} \\
    \midrule
    \rowcolor{gray!20} \multicolumn{7}{c}{\textbf{AF2 MSA}} \\
    \midrule
    \textbf{3bog\_B} & 41.493 & 0.150 & 0.130 & 22.443 & 0.148 & 0.129\\
    \textbf{7sxb\_A} & 84.931 & 0.739 & 0.757 & 2.559 & 0.661 & 0.753 \\
    \textbf{8gzu\_AN} & 58.189 & 0.390 & 0.488 & 17.630 & 0.700 & 0.406\\
    \textbf{8gzu\_T3} & 59.533 & 0.591 & 0.668 & 14.030 & 0.659 & 0.597 \\
    \midrule
    \rowcolor{gray!20} \multicolumn{7}{c}{\textbf{PLAME}} \\
    \midrule
    \textbf{3bog\_B} & 32.918 & 0.169 & 0.148 & 17.522 & 0.158 & 0.118\\
    \textbf{7sxb\_A} & 53.956 & 0.358 & 0.358 & 9.988 & 0.369 & 0.359 \\
    \textbf{8gzu\_AN} & 51.542 & 0.393 & 0.491 & 17.238 & 0.513 & 0.414\\
    \textbf{8gzu\_T3} & 55.169 & 0.377 & 0.480 & 20.930 & 0.691 & 0.394 \\
    \bottomrule
  \end{tabular}
  \caption{Comparison of folding enhancement on failure cases.}
  \label{tab:failure_table}
\end{table}

Among the visualized MSA cases, we observed that generated MSAs contained extremely similar sequences ($>$90\% similarity). Specifically, these high-similarity sequences caused all sites to appear more conserved, resulting in a lack of covariation patterns necessary for AlphaFold2 to infer structural contacts. This pattern was evident across all four cases. Notably, for 3bog\_B and 8gzu\_T3, the generated high-similarity MSAs further enhanced the conservation of already conserved regions, which consequently led to improvements in global metrics.

\subsection{De novo Protein Folding Enhancement}
\label{subsec:case_study}
We conduct further experiments on De Novo protein cases, where almost of them are orphan. Examples of de novo proteins include 8SK7 (RFDiffusion \citep{watson2023novo}), 8TNM/8TNO (Chroma \citep{ingraham2023illuminating}), and 8CYK (ProteinMPNN \citep{dauparas2022robust}). We followed the same augmentation pattern as the main experiment. 
\begin{table}[ht]
  \centering
  \begin{tabular}{l|cccccc}
    \toprule
    & \textbf{pLDDT} & \textbf{GDT} & \textbf{TMscore} & \textbf{RMSD} & \textbf{LDDT} & \textbf{pTM} \\
    \midrule
    \textbf{AF2 MSA} & 89.27 & 0.886 & 0.904 & 1.658 & 0.781 & 0.800 \\
    \textbf{HiFiAD} & 88.33 & 0.924 & 0.940 & 1.483 & 0.824 & 0.800 \\
    \bottomrule
  \end{tabular}
  \caption{Comparison of folding enhancement on de novo proteins.}
  \label{tab:denovo}
\end{table}
From Table \ref{tab:denovo}, we observed that PLAME experiences a slight decrease in pLDDT scores while simultaneously showing improvements in other metrics. The generated MSA visualizations in Figures \ref{fig:denovo_1} and \ref{fig:denovo_2} reveal that most generated sequences maintain $>70\%$ similarity to the query sequences. This phenomenon may be attributed to these test cases being highly Out-Of-Distribution (OOD) relative to the training dataset. Nevertheless, the diverse sampling strategy still effectively enhances the profile information of orphan proteins, resulting in substantial performance improvements. Furthermore, we visualized specific local regions where PLAME achieves superior alignment performance as measured by TMscore. Analysis revealed that across all augmented profiles, these high-performing local regions exhibit remarkable conservation, suggesting a strong correlation between sequence conservation patterns and structural alignment quality.

\begin{figure}[htp]
    \centering
    \includegraphics[width=0.95\textwidth,height=0.8\textheight,keepaspectratio]{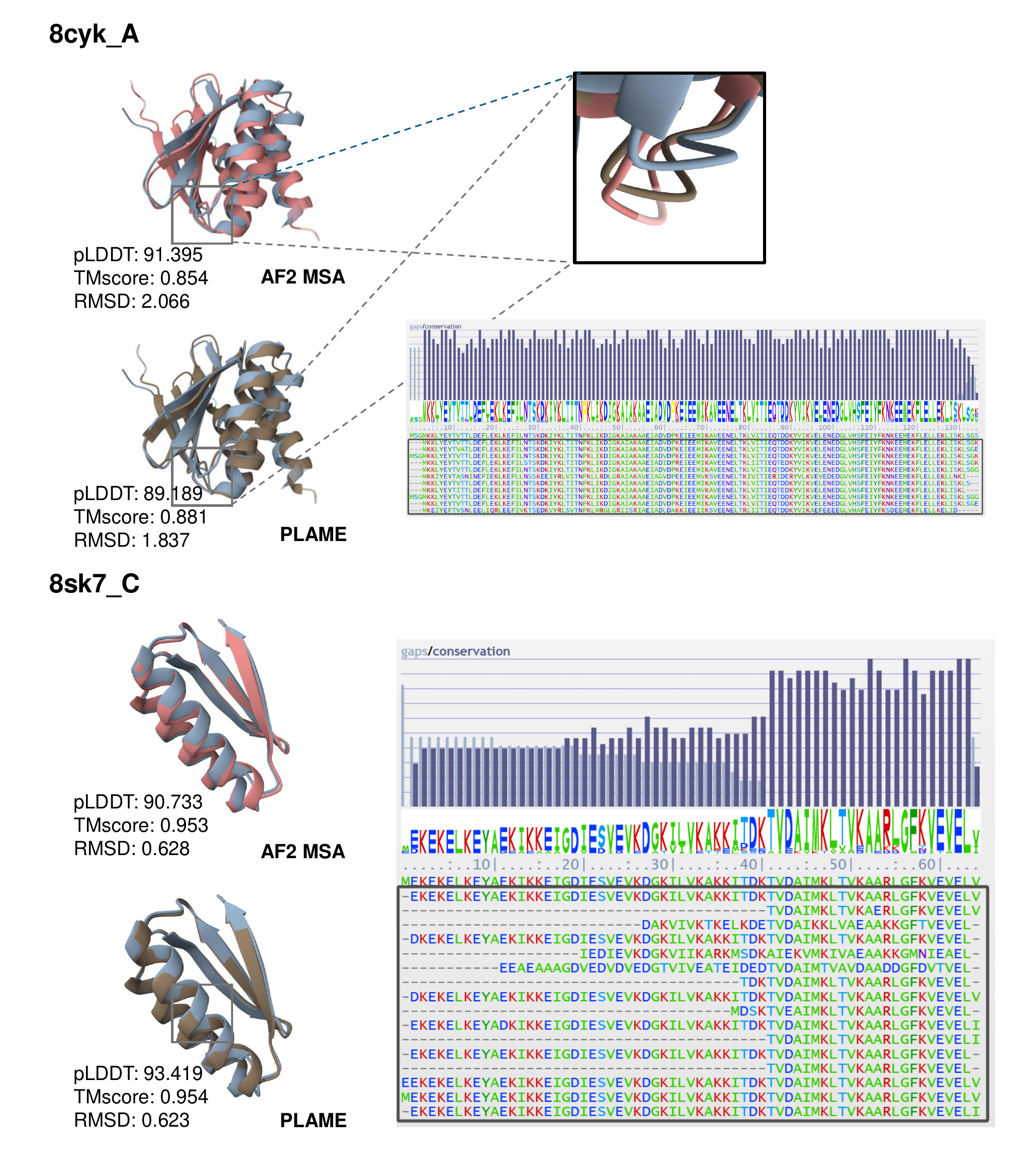}
    \caption{Comparison of structure enhancement of De Novo proteins.}
    \label{fig:denovo_1}
\end{figure}
\begin{figure}[htp]
    \centering
    \includegraphics[width=0.95\textwidth,height=0.8\textheight,keepaspectratio]{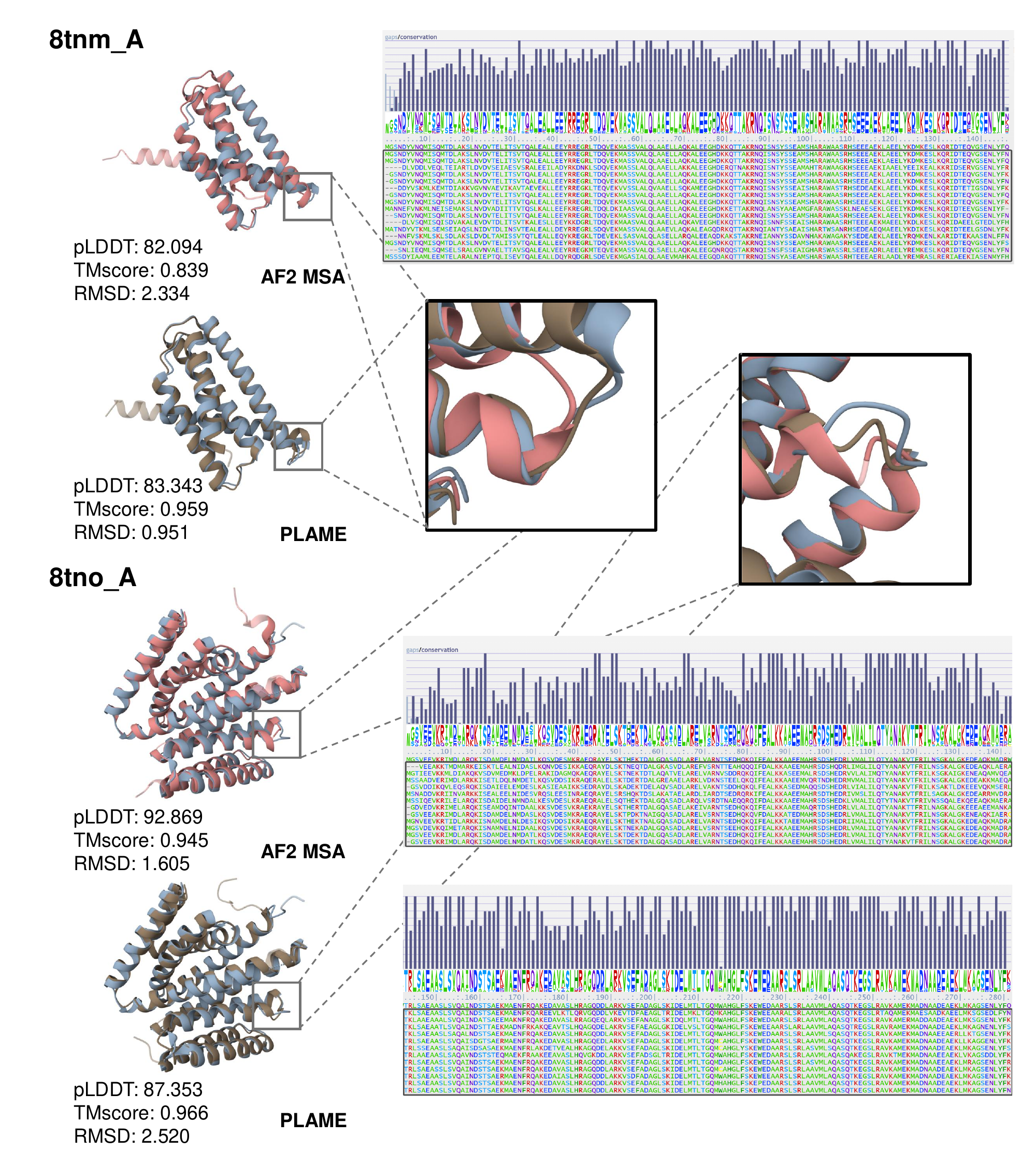}
    \caption{Comparison of structure enhancement of De Novo proteins.}
    \label{fig:denovo_2}
\end{figure}

\section{Discussion}
\subsection{Limitations}
\label{app:limitation}
Recent advancements in MSA generation models have shown promising results in enhancing protein folding predictions. However, several challenges remain to be addressed for broader applications and improved performance. \textbf{1) Limited quality} by current model architectures, data constraints, and generation strategies, such as relying on small MSA prompts, hinders the overall richness and informativeness of the generated MSAs. Future methods should focus on constructing more expressive evolutionary latent spaces to better capture the complexity of protein sequence relationships and improve the informativeness of generated MSAs. \textbf{2) Distribution gaps} still exist between the diversity and quality of generated MSAs and their natural counterparts, limiting their utility in broader applications. While current methods show potential in folding tasks, future models should focus on zero-shot generation capabilities to produce MSAs with distributions closer to natural MSAs, enabling broader applications such as conserved residue identification, mutation effect prediction, and functional annotation. \textbf{3) Assessing MSA quality} remains an unresolved issue, as current evaluations primarily rely on downstream folding performance to infer quality. Developing direct and robust quality assessment metrics will be crucial for systematically evaluating and improving MSA generation methods, enabling the selection of high-quality MSAs for specific applications and paving the way for next-generation models with enhanced accuracy, broader applicability, and greater biological relevance.

\clearpage
\section{Structure Comparison Visualization}
\label{sec:structure_vis}
\begin{figure}[htp]
    \centering
    \includegraphics[width=0.9\textwidth,height=0.8\textheight,keepaspectratio]{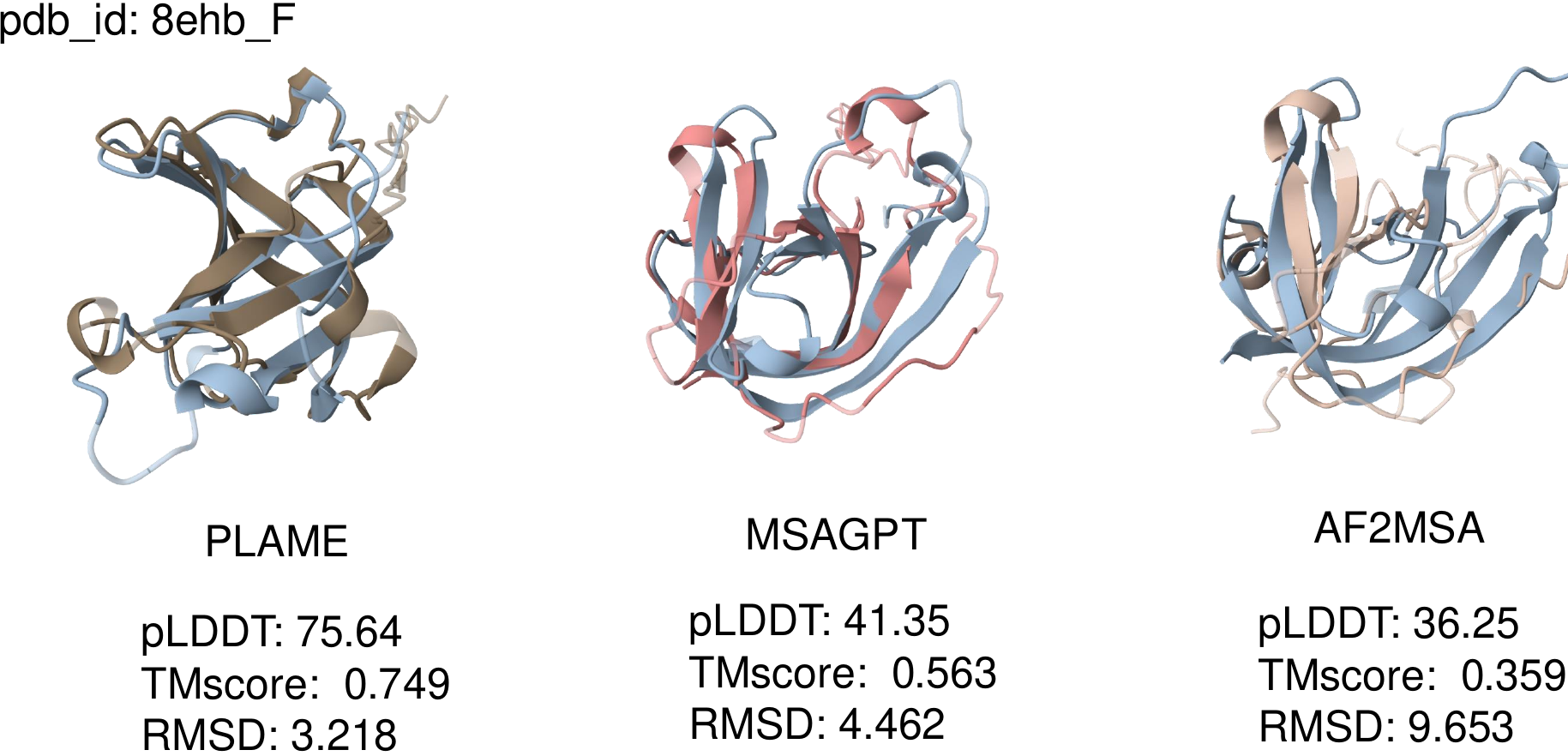}
    \caption{Structure comparison visualization of 8ehb\_F.}
    \label{fig:structure-1}
\end{figure}
% \begin{figure}[htp]
%     \centering
%     \includegraphics[width=0.9\textwidth,height=0.8\textheight,keepaspectratio]{exp/structure-2.pdf}
%     \caption{Structure comparison visualization of 8okh\_B.}
%     \label{fig:structure-2}
% \end{figure}
\newpage
\begin{figure}[htp]
    \centering
    \includegraphics[width=0.95\textwidth,height=0.8\textheight,keepaspectratio]{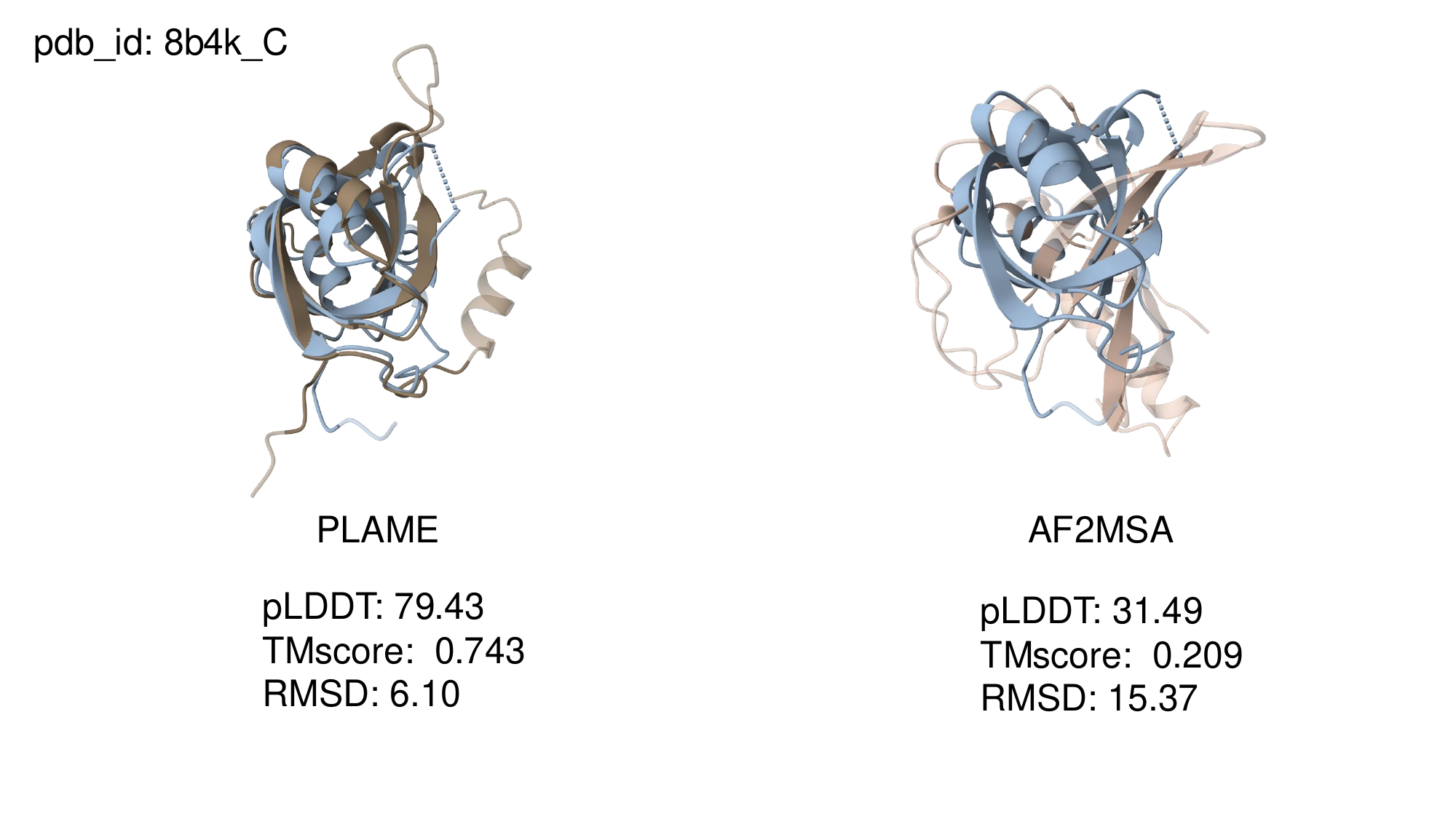}
    \caption{Structure comparison visualization of 8b4k\_C.}
    \label{fig:structure-3}
\end{figure}
\begin{figure}[htp]
    \centering
    \includegraphics[width=0.95\textwidth,height=0.8\textheight,keepaspectratio]{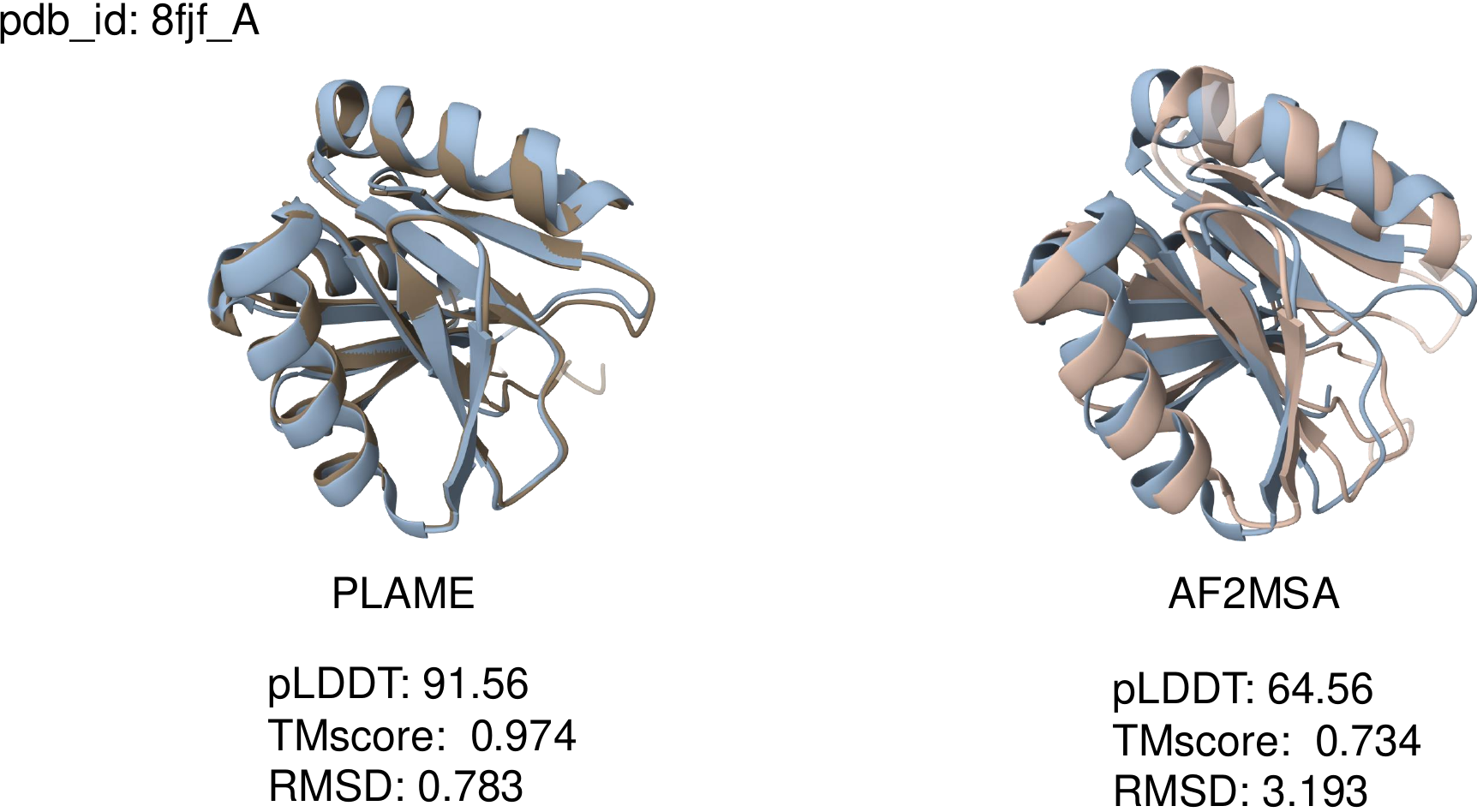}
    \caption{Structure comparison visualization of 8fjf\_A.}
    \label{fig:structure-4}
\end{figure}
\begin{figure}[htp]
    \centering
    \includegraphics[width=0.95\textwidth,height=0.8\textheight,keepaspectratio]{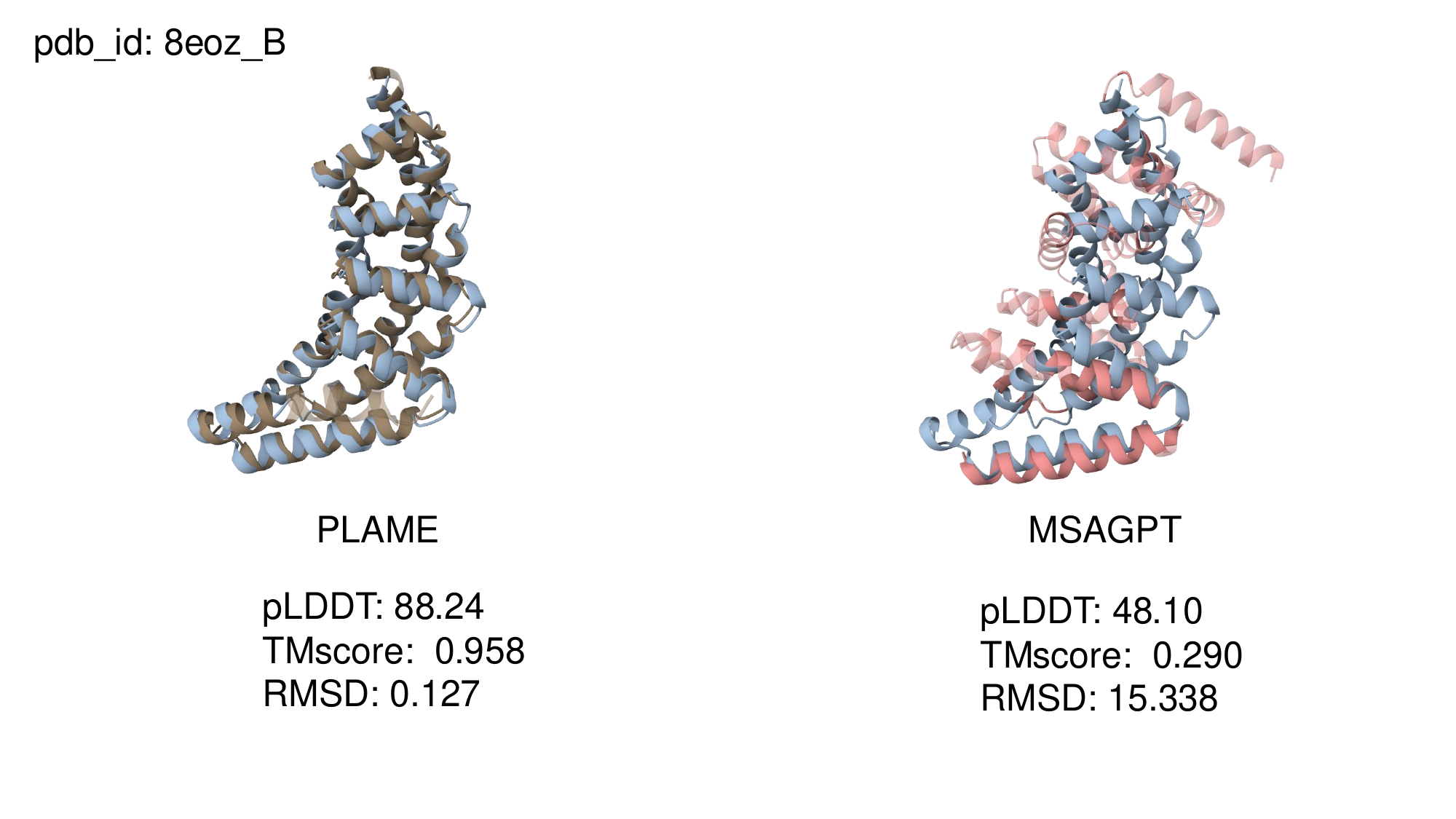}
    \caption{Structure comparison visualization of 8eoz\_B.}
    \label{fig:structure-5}
\end{figure}
\begin{figure}[htp]
    \centering
    \includegraphics[width=0.95\textwidth,height=0.8\textheight,keepaspectratio]{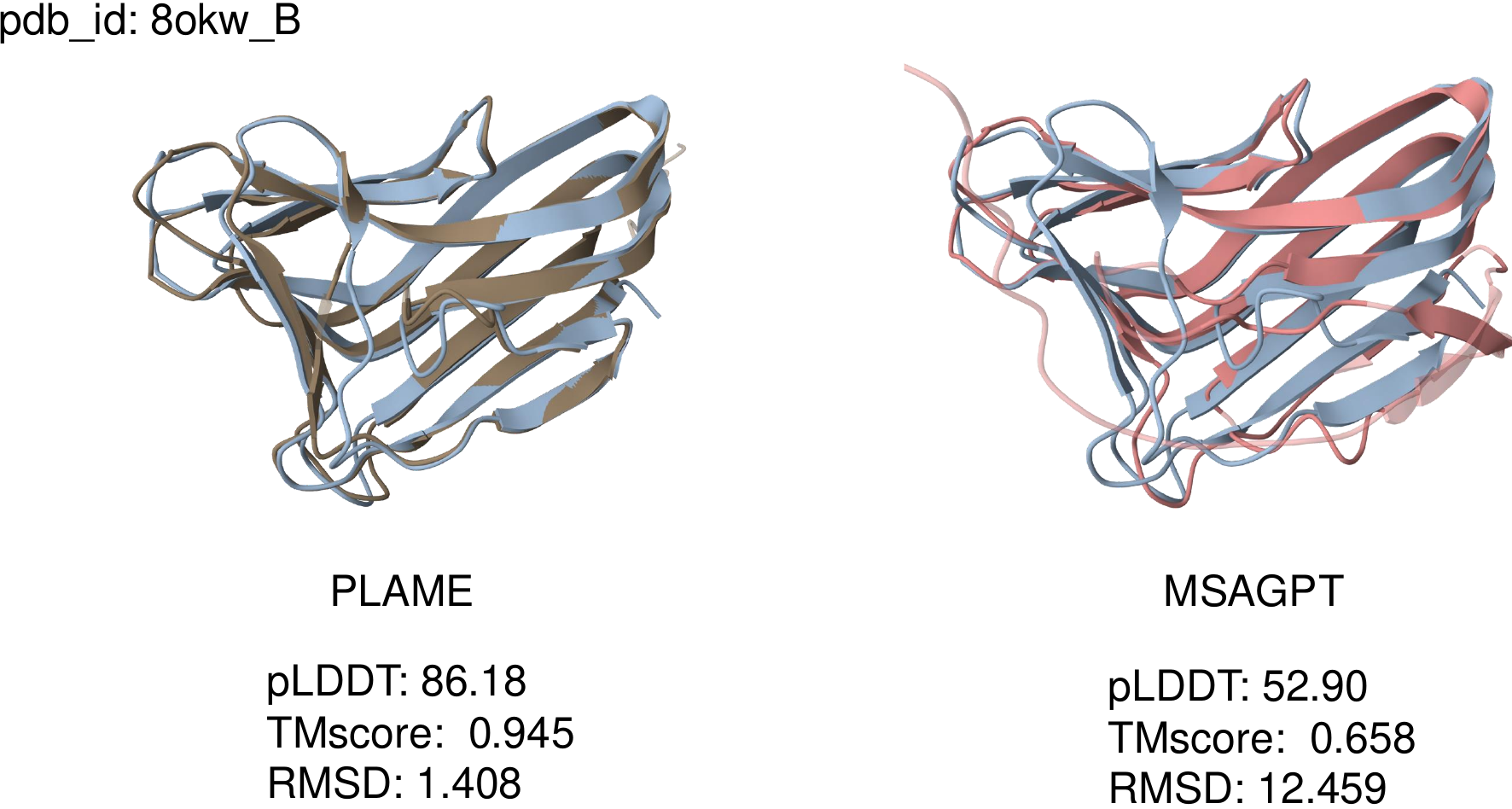}
    \caption{Structure comparison visualization of 8okw\_B.}
    \label{fig:structure-6}
\end{figure}

\clearpage

\section{Augmented MSA Visualization}
\label{app:vis}
To provide an intuitive understanding of the MSAs generated by PLAME, we selected several representative cases for visualization. These cases demonstrate consistent improvements in folding accuracy compared to the MSAs provided by AF2 and cover a range of sequence lengths, including short ($<$100), medium (100-300), and long ($>$300) sequences, as well as cases under few-shot and zero-shot settings. For each visualization, the generated MSAs are highlighted with a black box. Additionally, the upper portion of each figure presents conservation information alongside the corresponding gap information. The protein information is provided in the left-top corner at each figure.

\begin{figure}[ht]
    \centering
    \includegraphics[width=\textwidth,height=0.8\textheight,keepaspectratio]{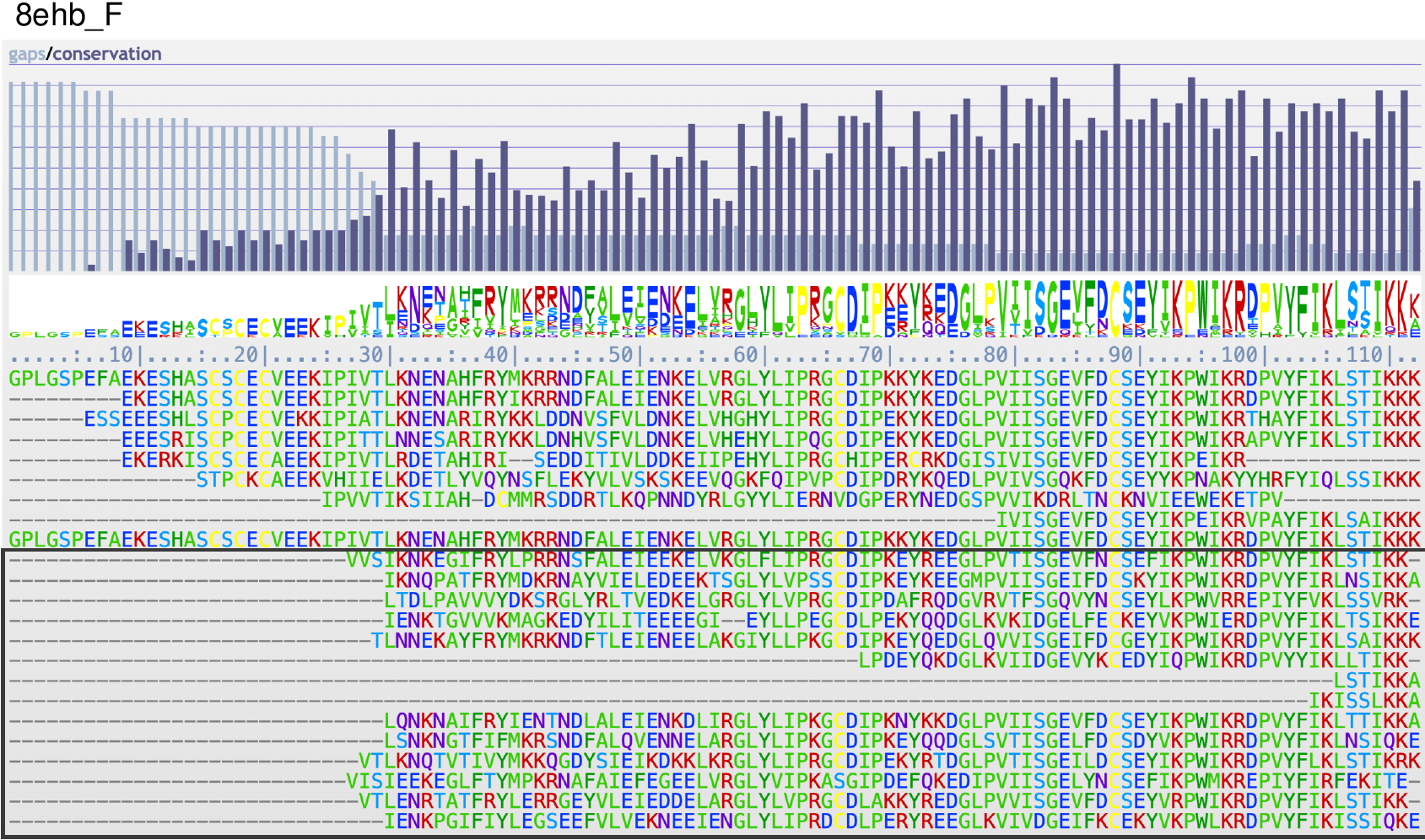}
    \caption{Augmented MSA visualization of 8ehb\_F.}
    \label{fig:msavis-1}
\end{figure}

\begin{figure}[ht]
    \centering
    \includegraphics[width=\textwidth,height=0.8\textheight,keepaspectratio]{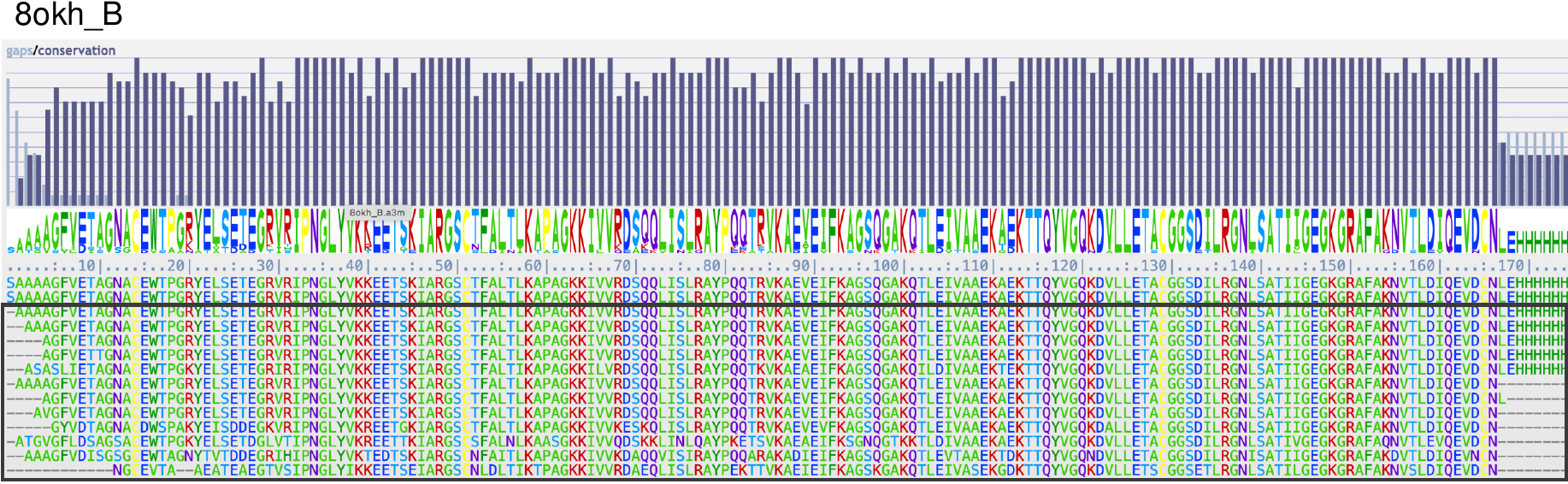}
    \caption{Augmented MSA visualization of 8okh\_B.}
    \label{fig:msavis-3}
\end{figure}

\begin{figure}[ht]
    \centering
    \includegraphics[width=\textwidth,height=\textheight,keepaspectratio]{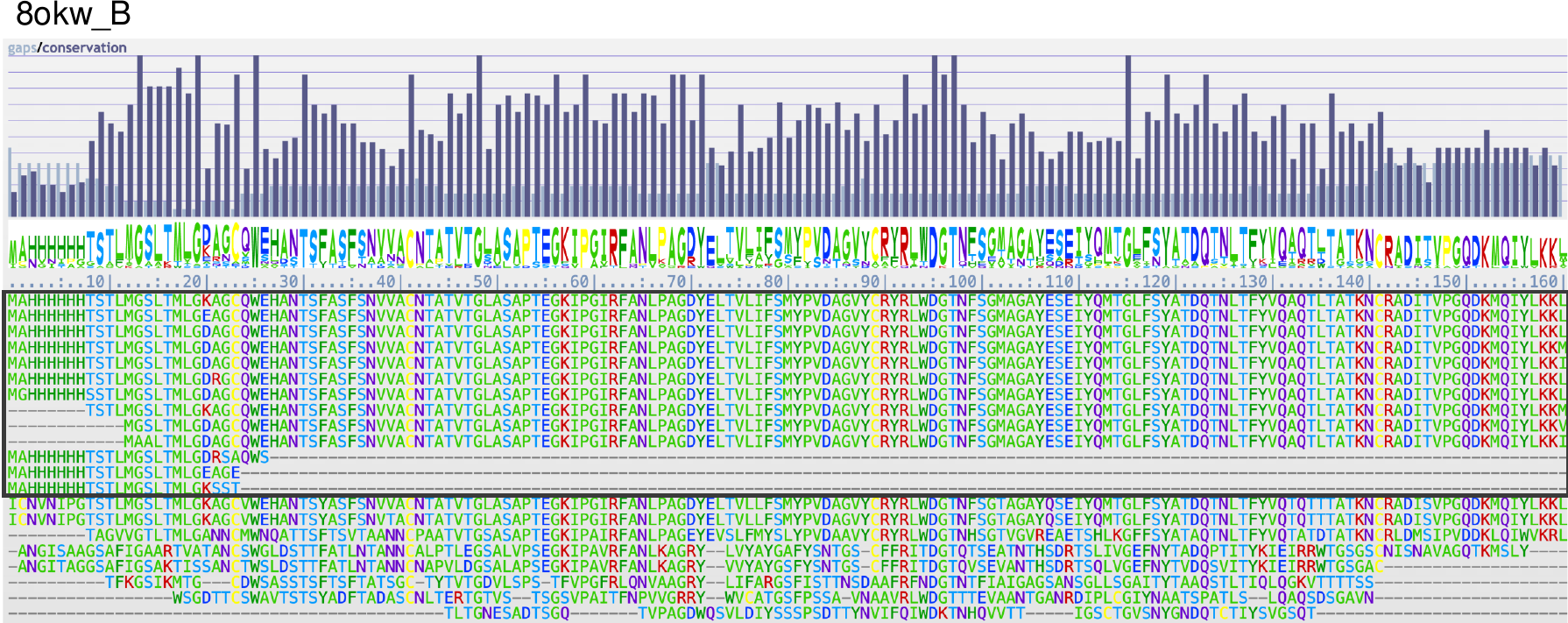}
    \caption{Augmented MSA visualization of 8okw\_B.}
    \label{fig:msavis-2}
\end{figure}

\begin{figure}[ht]
    \centering
    \includegraphics[width=\textwidth,height=\textheight,keepaspectratio]{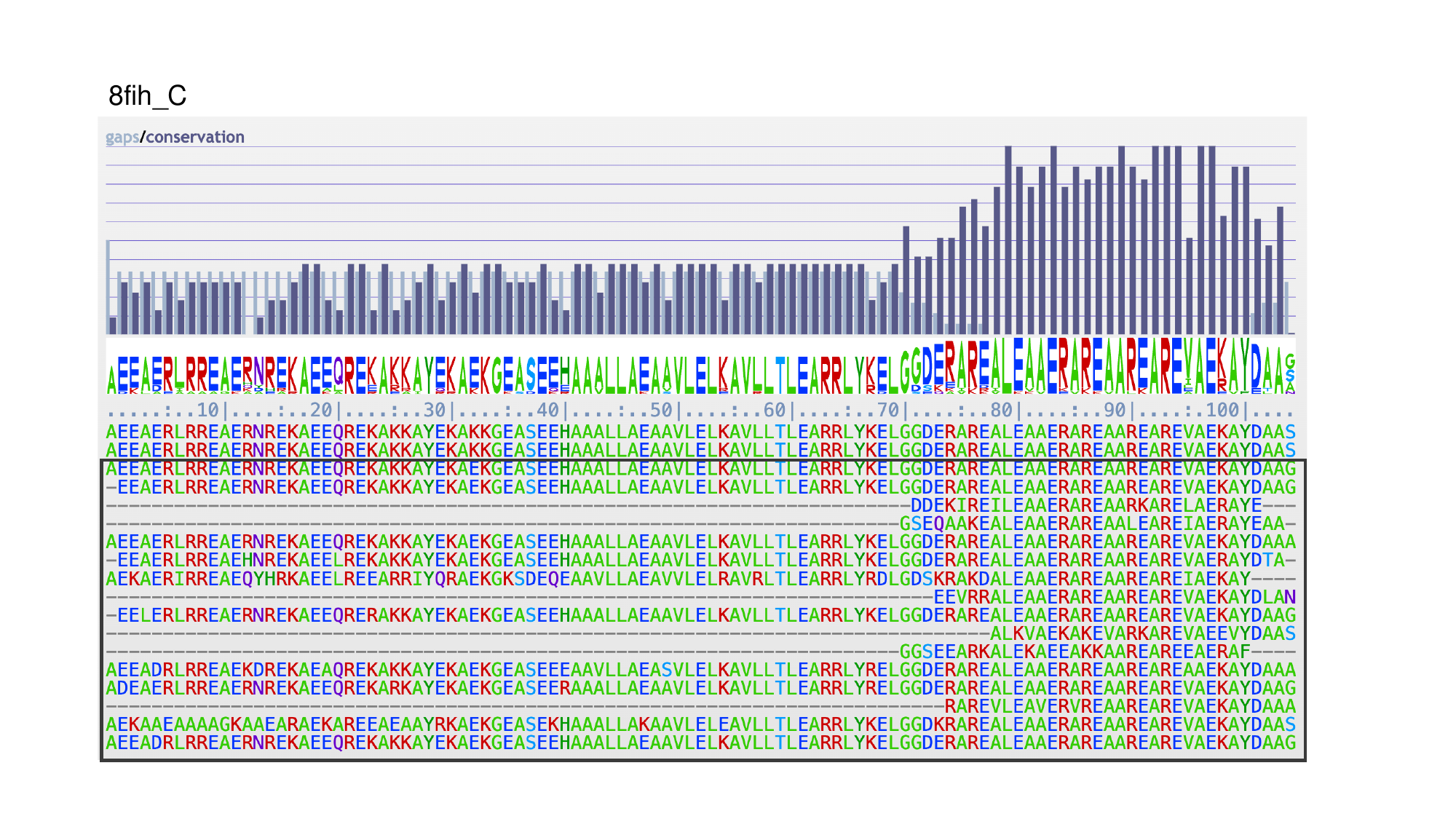}
    \caption{Augmented MSA visualization of 8fih\_C.}
    \label{fig:msavis-4}
\end{figure}

\begin{figure}[ht]
    \centering
    \includegraphics[width=\textwidth,height=\textheight,keepaspectratio]{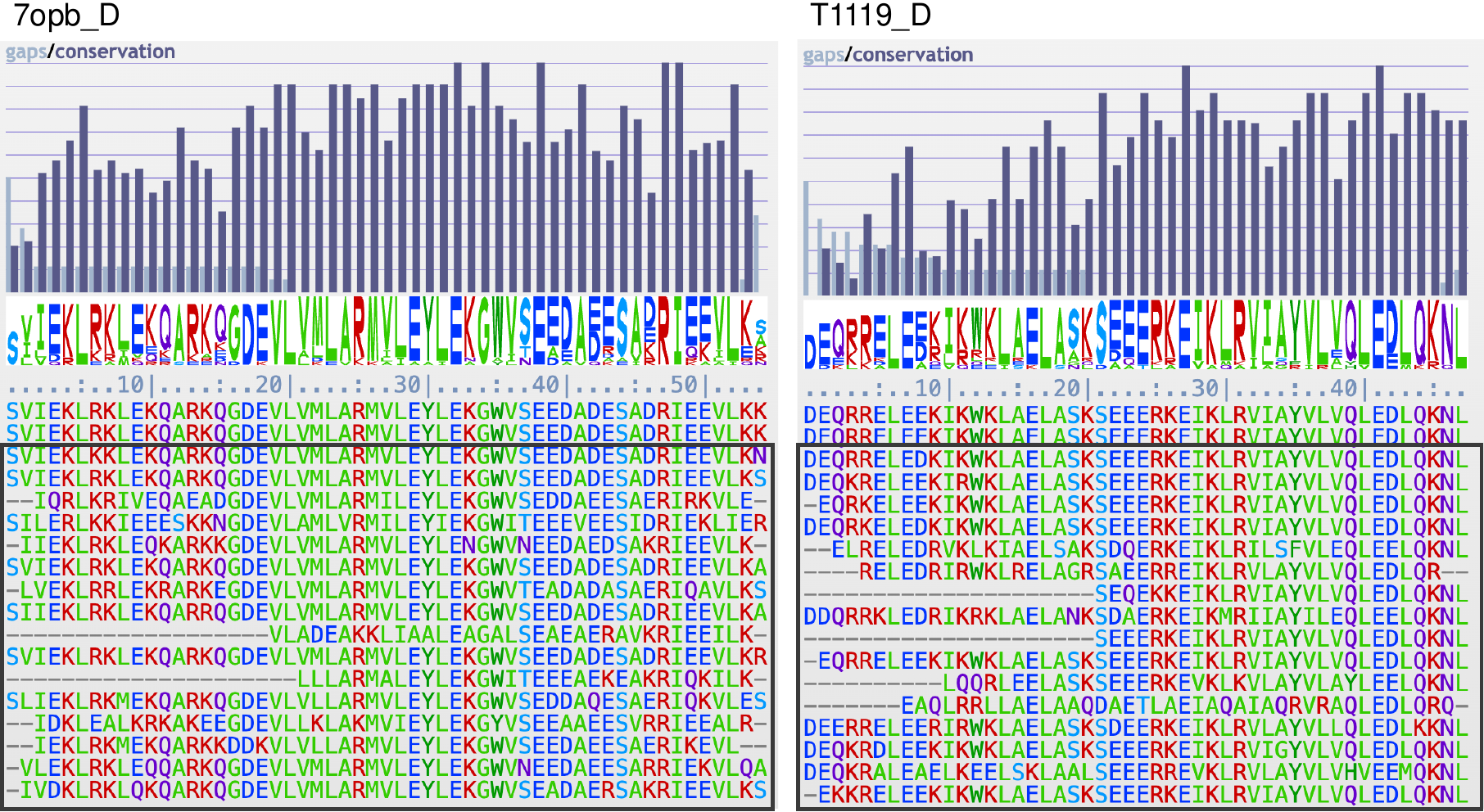}
    \caption{Augmented MSA visualization of 7opb\_D and T1119\_D.}
    \label{fig:msavis-5}
\end{figure}

\begin{figure}[ht]
    \centering
    \includegraphics[width=\textwidth,height=\textheight,keepaspectratio]{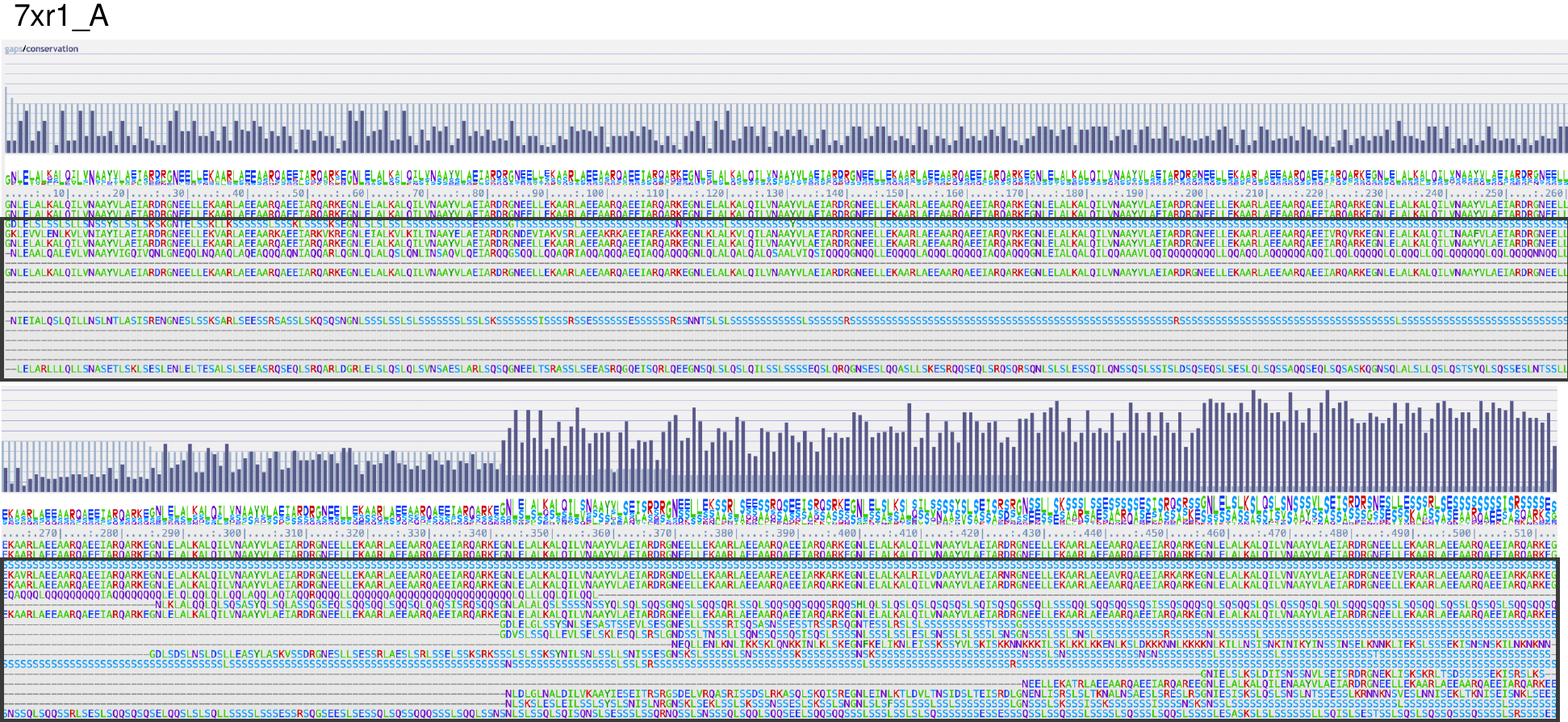}
    \caption{Augmented MSA visualization of 7xr1\_A.}
    \label{fig:msavis-6}
\end{figure}

\begin{figure}[ht]
    \centering
    \includegraphics[width=\textwidth,height=\textheight,keepaspectratio]{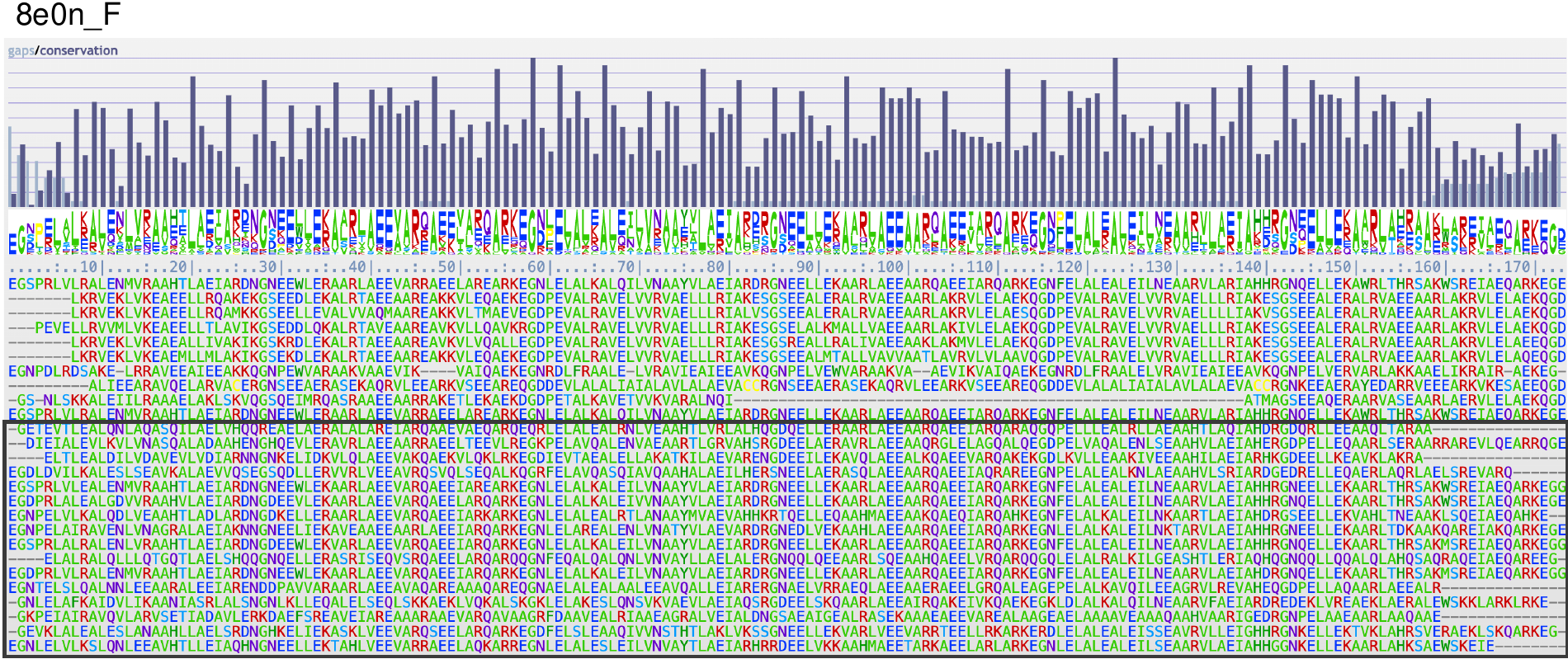}
    \caption{Augmented MSA visualization of 8e0n\_F.}
    \label{fig:msavis-7}
\end{figure}

\clearpage
\section{Failure Case MSA Visualization}
\begin{figure}[ht]
    \centering
    \includegraphics[width=\textwidth,height=0.6\textheight,keepaspectratio]{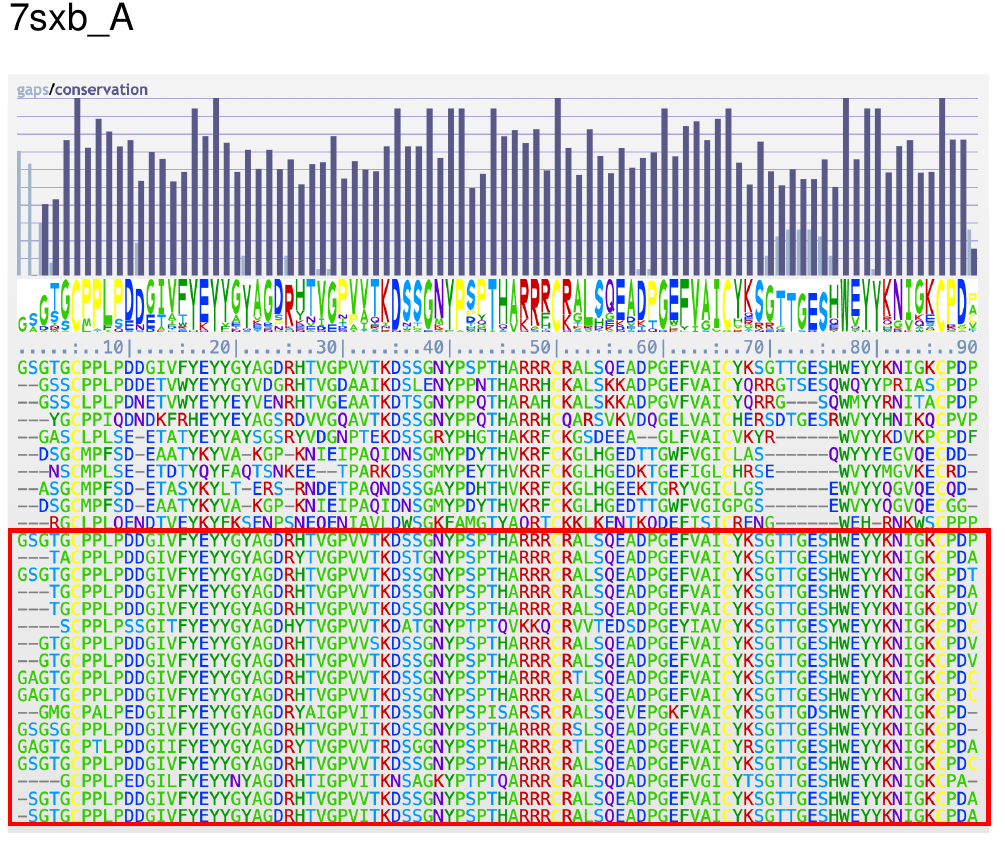}
    \caption{Failure Case MSA visualization of 7sxb\_A.}
    \label{fig:failurevis-1}
\end{figure}
\begin{figure}[ht]
    \centering
    \includegraphics[width=\textwidth,height=0.6\textheight,keepaspectratio]{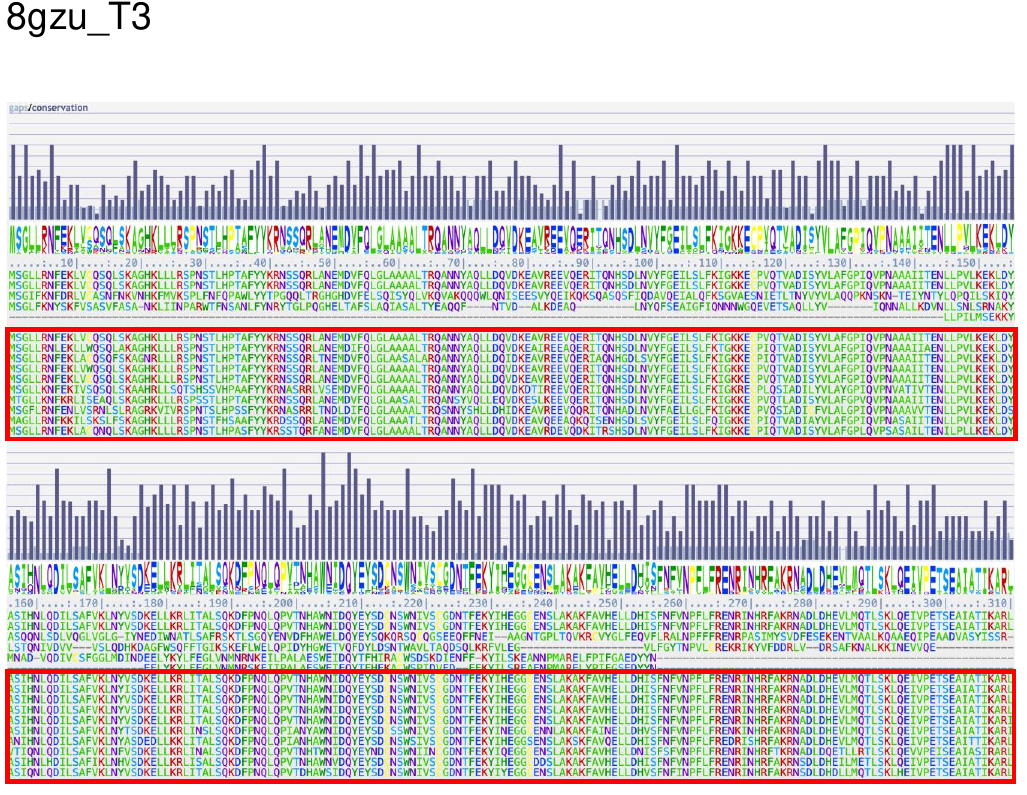 }
    \caption{Failure Case MSA visualization of 8gzu\_T3.}
    \label{fig:failurevis-2}
\end{figure}
\begin{figure}[ht]
    \centering
    \includegraphics[width=\textwidth,height=0.6\textheight,keepaspectratio]{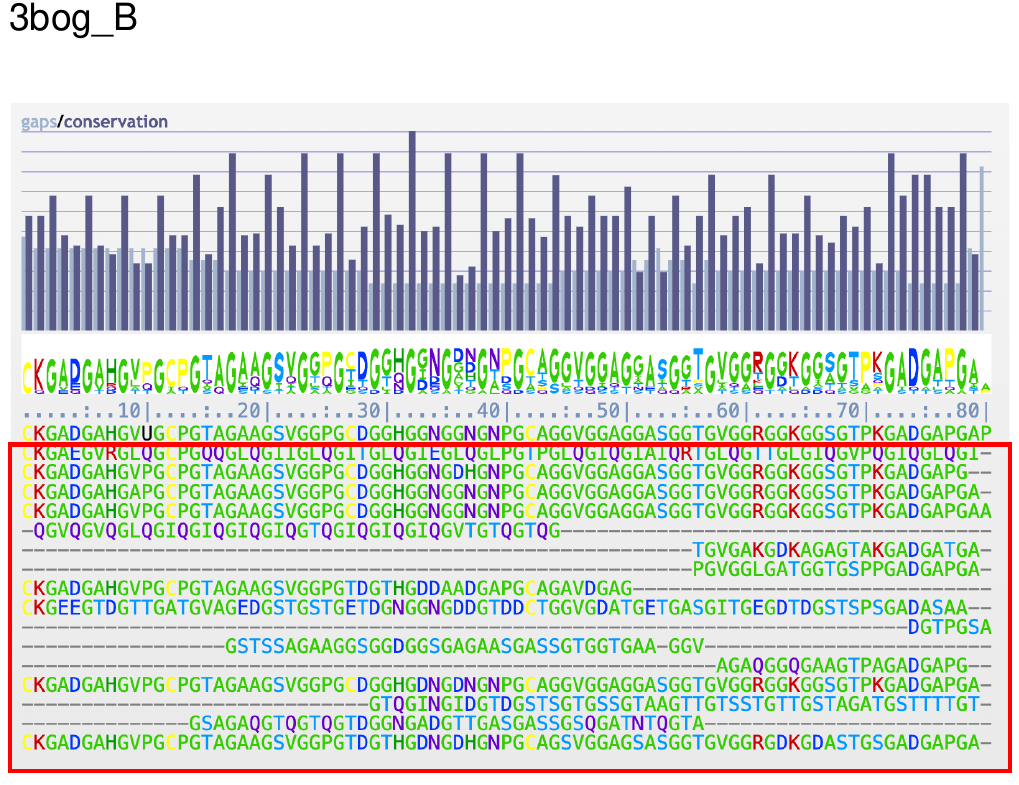 }
    \caption{Failure Case MSA visualization of 3bog\_B.}
    \label{fig:failurevis-3}
\end{figure}
\begin{figure}[ht]
    \centering
    \includegraphics[width=\textwidth,height=0.6\textheight,keepaspectratio]{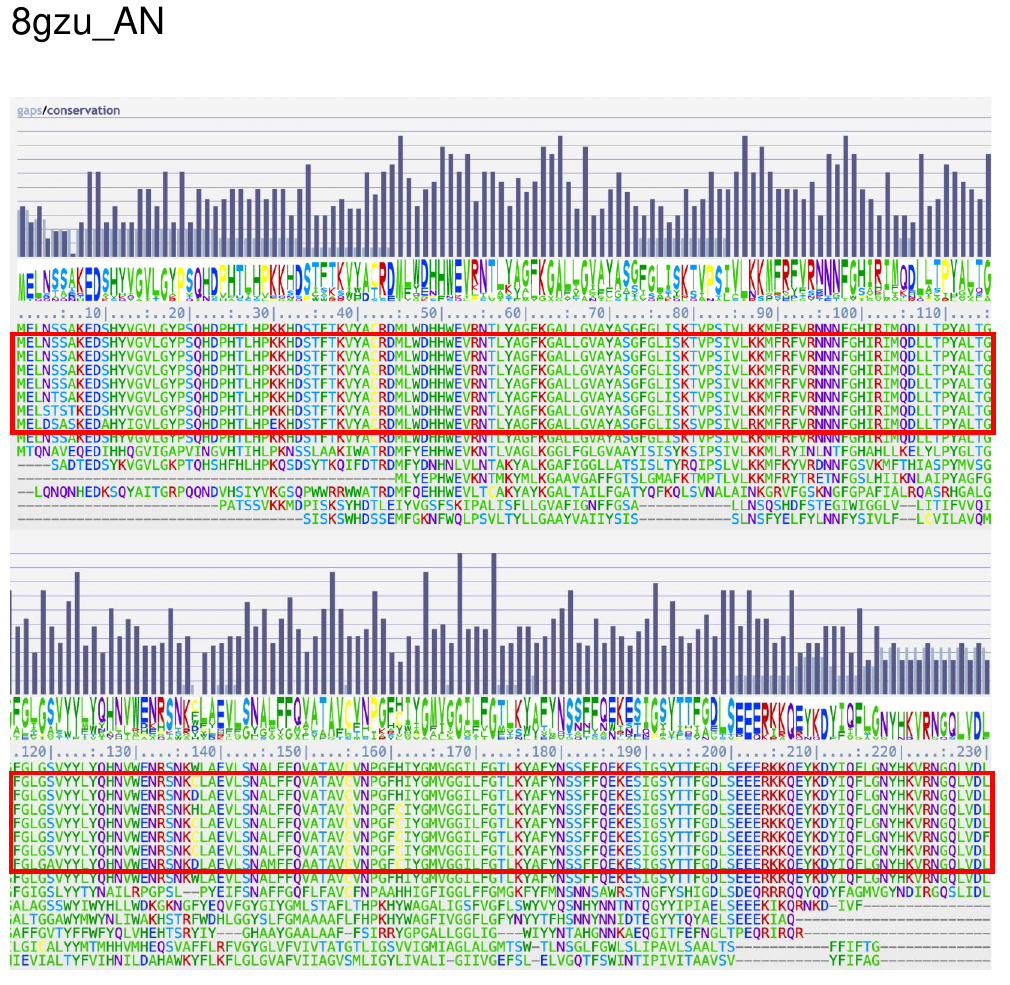 }
    \caption{Failure Case MSA visualization of 8gzu\_AN.}
    \label{fig:failurevis-4}
\end{figure}

%%%%%%%%%%%%%%%%%%%%%%%%%%%%%%%%%%%%%%%%%%%%%%%%%%%%%%%%%%%%

% \input{section/checklist}

\end{document}